\newcommand{\Hide}[1]{}
\newtheorem{theorem}{Theorem}
\newtheorem{lemma}{Lemma}
\newtheorem{definition}{Definition}
\newcommand{\vpara}[1]{\vspace{0.05in}\noindent\textbf{#1 }}
\title{Towards Fair Graph Federated Learning via Incentive Mechanisms}
\author{
    Chenglu Pan\textsuperscript{\rm 1,2}\equalcontrib\thanks{This work was done when the author was a visiting student at Fudan University.}, Jiarong Xu\equalcontrib\textsuperscript{\rm 2}\thanks{Corresponding author.}, Yue Yu\textsuperscript{\rm 2}, Ziqi Yang \textsuperscript{\rm 1}, Qingbiao Wu\textsuperscript{\rm 1},\\ Chunping Wang\textsuperscript{\rm 3}, Lei Chen\textsuperscript{\rm 3}, Yang Yang\textsuperscript{\rm 1}
}
\begin{document}

\maketitle

\begin{abstract}
Graph federated learning (FL) has emerged as a pivotal paradigm enabling  multiple agents to collaboratively train a graph model while preserving local data privacy. Yet, current efforts overlook a key issue: agents are self-interested and would hesitant to share data without fair and satisfactory incentives. This paper is the first endeavor to address this issue by studying the incentive mechanism for graph federated learning. We identify a unique phenomenon in graph federated learning: the presence of agents posing potential harm to the federation and agents contributing with delays. This stands in contrast to previous FL incentive mechanisms that assume all agents contribute positively and in a timely manner. 
In view of this, this paper presents a novel incentive mechanism  tailored for fair graph federated learning, integrating incentives derived from both model gradient and payoff. To achieve this, we first introduce an agent valuation function aimed at quantifying agent contributions through the introduction of two criteria: gradient alignment and graph diversity. Moreover, due to the high heterogeneity in graph federated learning, striking a balance between accuracy and fairness becomes particularly crucial. We introduce motif prototypes to enhance accuracy, communicated between the server and agents, enhancing global model aggregation and aiding agents in local model optimization. Extensive experiments show that our model achieves the best trade-off between accuracy and the fairness of model gradient, as well as superior payoff fairness.

\end{abstract}

\vspace{-0.3in}
\section{Introduction}
\vspace{-0.01in}
Graph data is ubiquitous, exhibiting diverse and generic connectivity patterns, yet a notable portion is distributed or isolated among distinct agents (\emph{e.g.}, companies, research institutions). Unlocking the potential of this ``closed graph data'' represents a hidden gold mine in the era of big data. Recent advances of graph federated learning offer opportunities for collaboration among multiple agents without compromising data privacy, with each agent conducting local model training and sharing their updates/gradients with a global model on a server~\cite{mcmahan2017communication, karimireddy2020scaffold, li2020federated, zhao2018federated,xie2021federated, zhang2021subgraph, xie2023federated, gu2023dynamic, Zhang2021FASTGNNAT}. For instance, financial companies with their own transaction networks can join forces to enhance fraud detection model by participating in a graph federation.

However, in reality, the agents are self-interested and may not be cooperative if all agents receive the same model while their contributions differ.  This implies that to achieve a competitive global graph model, there is a strong need to establish a fair graph federated learning framework that incentives agents to provide high-quality information. 
In view of this, several studies have explored incentive mechanisms for FL, with a primary focus on image domain~\cite{xu2020reputation, xu2021gradient, Deng2021FAIRQF, Gao2021FIFLAF}. 

These previous works almost assume that all agents contribute positively and in a timely manner (as depicted in Figure~\ref{fig:obs1}(a)).
However, our observations in Figure~\ref{fig:obs1}(b) uncover substantial differences in graph federated learning, even though all the participants are honest: 
(1) Specific agents, like agent 4 and 5, negatively impact the entire federation;
(2) The contributions of certain agents exhibit delays; for instance, agent 1 initially poses a harmful impact but contributes the most after multiple rounds of training. \emph{As the first contribution, we unveil a unique phenomenon in the context of graph federated learning: the presence of agents posing potentially harm and contributing with delays.}

\begin{figure*}[t]
	\centering     
    \vspace{-0.1in}

 \includegraphics[width=0.9\textwidth]{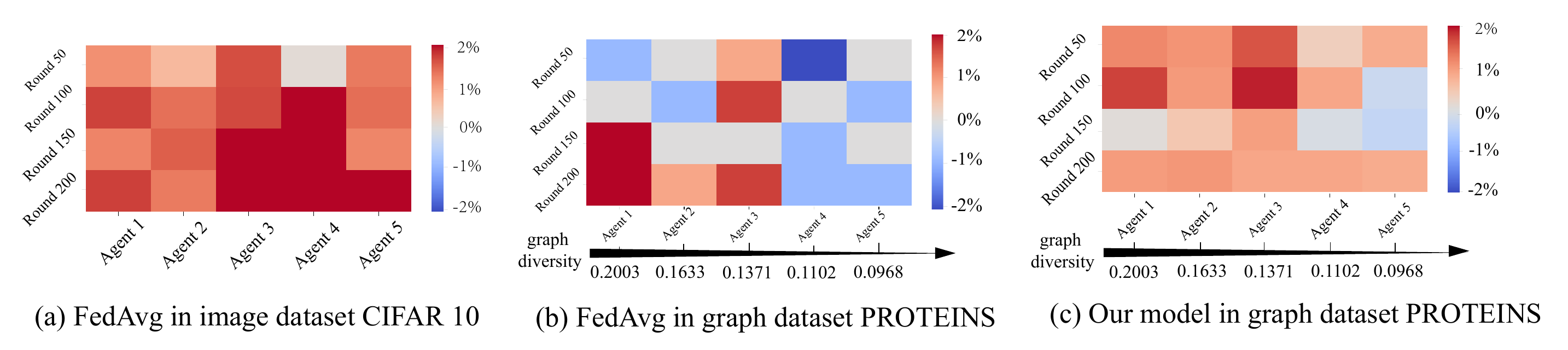} \\
    \vspace{-0.15in}
    \captionsetup{font={small}}
	\caption{The contribution of each agent in different communication rounds. Positive contributions are highlighted in red, while negative contributions are indicated in blue. The contribution of agent $i$ in $t$-th round is computed as the relative improvement in performance on the server's global test set in $t$-th round when all agents participate compared to when agent $i$ is excluded. {It is observed that in Figure 1(b) of graph federated learning, certain agents make negative contributions, while in Figure 1(a) of image federated learning, all agents make positive contributions. Additionally, some agents in graph federated learning (Figure 1(b)) initially pose a harmful impact but contribute positively to the federation after multiple rounds of training.}}

 \vspace{-0.1in}
	\label{fig:obs1}     
\end{figure*}

In light of the uniqueness of graph federated learning, we believe that an ideal incentive mechanism  for graph federated learning should simultaneously satisfy two criteria: the capability of (1) \emph{rewarding contributing agents while penalizing those causing harm}, and (2) \emph{offering post-hoc compensation to agents with delayed contributions.}

However, the primary challenge lies in  designing a comprehensive incentive mechanism framework that satisfies both criteria. Existing incentive mechanisms mainly fall into two lines, each with inherent limitations to the desired one. One line aims to allocate model gradients---the aggregated parameter updates/gradients that agents download from the server---to reward contributing agents \cite{xu2020reputation,xu2021gradient}. Yet, they cannot  penalize agents that may potentially do harm and compensate those with delayed contributions. 
The other line focuses on the allocation of payoff (such as monetary or computational resources) \cite{Gao2021FIFLAF,Yu2020AFI}, but fails to provide training-time incentives during training, potentially diminishing agent motivation.

To address this challenge, we put forth a novel incentive mechanism tailored for fair graph federated learning, outlined in Table~\ref{tab:incentive}.  This mechanism seamlessly integrates the allocation of both \emph{model gradients} and \emph{payoff}, whereby model gradients can function as rewards for contributing agents and offer acknowledgment in a training-time manner, while  payoff serve a dual purpose by not only imposing penalties on agents who may potentially do harm but compensating agents with delayed contributions.

\begin{table}[t]
\setlength\tabcolsep{3pt}
\centering
\vspace{-0.05in}

\begin{tabular}{ccc}
\hline
\multicolumn{1}{l}{}   & \makecell{\emph{Feature 1}:\\ reinforcement type}   & \multicolumn{1}{l}{\makecell{\emph{Feature 2}:\\ workflow phase}} \\ \hline

model gradient  & reward               & training-time                      \\
payoff               & reward \& punishment & post-hoc                           \\ \hline
\end{tabular}
\vspace{-0.05in}
\caption{Features of our proposed incentive mechanism  for fair graph federated learning.}
\label{tab:incentive}
\vspace{-0.25in}
\end{table}

Within this framework, the subsequent challenge is how to value an agent's contribution/harm to graph federation, such that the allocation can be conducted according to the agent value. 
Previous works typically value agents on an auxiliary validation set~\cite{jia2019towards, song2019profit, wang2020principled}, but selecting a validation set that is accessible and agreed upon by all agents poses challenges.
To address this issue, we introduce an agent valuation function, incorporating two criteria: \emph{gradient alignment} and \emph{graph diversity}. 
Gradient alignment exploit the similarity between local gradients and the server's global gradients. However, relying solely on gradient alignment could underestimate agents with delayed contributions. This is evident in Figure~\ref{fig:obs1}(b), where agents with delayed contributions often exhibit high diversity. We therefore introduce  an additional criterion: the graph diversity of agents' local data.

The remaining challenge lies in how to enhance the accuracy-fairness trade-off; this is critical especially in the presence of agents potentially do harm in graph federated learning.
Previous endeavors have attempted to aggregate the local gradients based on the agent value \cite{mcmahan2017communication,xu2021gradient}. However, 
this approach falls short of ensuring the model quality in graph federated learning, due to the high heterogeneity in graph data \cite{xie2021federated}. This paper introduces a novel concept of \emph{motif prototypes} as a reference coordinate  between server and agents,  facilitating not only the server's role in global model aggregation but also agents in optimizing their local models.

Our major contributions can be summairzed as follows:
\begin{itemize}[leftmargin=*,topsep=0pt]
\item \textbf{Problem.} To the best of our knowledge, we are \emph{the first} work to study the incentive mechanism for graph federated learning.
We unveil a unique phenomenon in the context of graph federated learning: the presence of agents posing potentially harm and contributing with delays.

\item \textbf{Method.} We propose a novel incentive mechanism tailored for fair graph federated learning that provides both model gradients and payoff for agents. Particularly, we propose to value the agents based on gradient alignment and graph diversity, and introduce motif prototypes to enhance accuracy-fairness trade-off.  

\item \textbf{Experiment.} 
In-depth experiments conducted across various settings reveal that our model not only demonstrates superiority in the allocation of model gradients and payoff, but also achieves the most favorable trade-off between fairness and accuracy among state-of-the-art baselines.

\end{itemize}

\vspace{-0.1in}
\section{Preliminary}
\vspace{-0.05in}
\vpara{Vanilla graph federated learning.}
Vanilla graph federated learning
involves $N$ \emph{honest} agents, where each agent $i$ holds a local graph dataset denoted as $D_i$, comprising a set of graphs. The objective is to learn a shared global model, typically a graph neural network (GNN),  across all clients, which can be formulated as 
\begin{equation}
\min_{(\omega_i, \omega_2\cdots,\omega_N)} \sum_{i=1}^{N}\frac{\left|D_i\right|}{\left|D\right|} L(\omega_i; D_i),
\end{equation}
where $\omega_1=\omega_2=\cdots =\omega_N$, $\omega_i$ represents the model parameters of agent $i$, {$L(\omega_i; D_i)$ is the loss function of graph classification for local training in agent $i$,} $\left|D_i\right|$ is the number of instances in client $i$, and $\left|D\right|$ is the total number of instances over all clients. 

This objective is achieved through a two-step process: aggregation and distribution.
In the aggregation step, the gradients uploaded by the individual agents, $\{\mathbf{u}_0^t, \cdots, \mathbf{u}_N^t\}$, are combined in the server to obtain the aggregated gradient $\mathbf{u}_{\mathcal{N}}^t$. This aggregation is performed using a weighted average, and the weights are proportional to the sizes of the local datasets:
\begin{equation}
\mathbf{u}_{\mathcal{N}}^t=\sum_{i=1}^N \frac{\left|D_i\right|}{\left|D\right|} \mathbf{u}_i^t.
\end{equation}

In the distribution step, the aggregated gradient $\mathbf{u}_{\mathcal{N}}^t$ is sent back to each agent, and each agent $i$ receives the same gradient $\mathbf{u}_{\mathcal{N}}^t$ as a reward in $t$-th communication round.

Our current focus is solely on addressing fairness concerns related to honest agents, without considering the examination of potential cheating behaviors among agents.

\vpara{Shapely value.}
In the context of FL, the Shapley value can be applied to assess the value of individual agents in the collaborative federation \cite{xu2021gradient,song2019profit}. It provides a way to quantify the contribution of each agent in improving the overall performance in federation.

The Shapley value, originally introduced in cooperative game theory~\cite{shapley1997value}, is a widely used concept for evaluating the contribution of individual players in a coalition game. It measures the expected marginal value that a player brings when joining different coalitions, considering all possible permutations of players.

\begin{definition}[Shapley Value]~\label{def:shapley}
    Let $\mathcal{N}$ denote the set of all agents (\emph{i.e.}, the grand coalition),  a coalition $\mathcal{S}\subseteq\mathcal{N}$ is the subset of $\mathcal{N}$, $\Pi_{\mathcal{N}}$ is the set of all possible permutation of $\mathcal{N}$.     For a given permutation $\pi \in \Pi_{\mathcal{N}}$, $\mathcal{S}_{\pi,i}$ represents the coalition of agents preceding agent $i$ in the permutation.
    The gradient-based Shapley value of agent $i \in \mathcal{N}$ is defined as
\begin{equation}
\label{shapley}
    \varphi_i := \frac{1}{N!}\sum_{\pi \in \Pi_{\mathcal{N}}}\left[\nu\left(\mathcal{S}_{\pi, i} \cup\{i\}\right)-\nu\left(\mathcal{S}_{\pi, i}\right)\right],
\end{equation}
where $\nu(\mathcal{S})$ represents the value function associated with coalition $\mathcal{S}$.
\end{definition}

\vspace{-0.1in}
\section{Methodology}

This section introduces our incentive mechanism framework for promoting fairness in graph federated learning; see Figure \ref{fig:framework} for an overview. We first provide an overview of our framework that encompasses the allocation of both model gradients and payoff in \S~\ref{subsec:fair}. To implement this framework, we tackle two pivotal questions: how to assess an agent's contribution or potential harm within the graph federation context in \S~\ref{subsec: reputation}, and how to ensure the accuracy of graph federated learning in \S~\ref{subsec: quality}.

\Hide{This section introduces our proposed incentive mechanism-based framework towards fair graph federated learning, which aims to enhance fairness of agents in graph federation while maintaining global model quality. 
In this section, we propose the incentive mechanism-based framework towards fair graph federated learning, which aims to enhance fairness of agents in graph federation while maintaining global model quality.
To achieve the goal, we need to address three key questions: (1) how can the model gradients and payoff mechanism ensure fairness in graph federation (\S~\ref{subsec:fair}); (2) how to value an agent's contribution/harm to graph federation (\S~\ref{subsec: reputation}), and (3) how to guarantee the global model quality, especially in the presence of harmful agents (\S~\ref{subsec: quality}).}

\subsection{Overview Framework} \label{subsec:fair}
Our goal is to enhance fairness in graph federation by combining model gradients and payoff allocation mechanisms (Table \ref{tab:incentive}) to reward contributing agents, penalize agents with potential harm, and provide post-hoc compensation for delayed contributions. 

{Before introducing the whole framework, we define the agent value, $r_i^t$, which indicates the contribution of agent $i$ in the $t$-th communication round (details about the agent value could be found in  \S~\ref{subsec: reputation}). We proceed to elaborate on these incentive mechanisms as below.}

\begin{figure}[t]
     \centering
    \includegraphics[width=1\columnwidth]{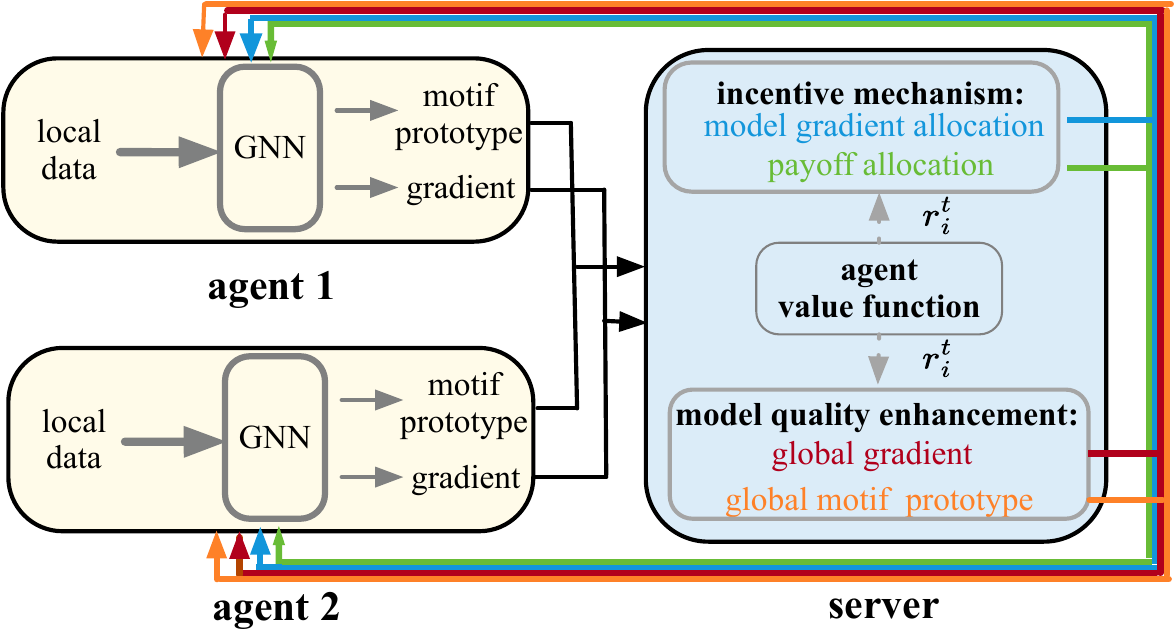}
      \vspace{-0.2in}
       \captionsetup{font=small}
    \caption{The proposed framework of fair graph federated learning.
    }
    \vspace{-0.3in}
    \label{fig:framework}
\end{figure}
\vpara{Model gradients allocation.}
In the vanilla FL framework, all agents download the same gradients from the server~\cite{mcmahan2017communication, li2020federated}, but this is unfair when dealing with agents with differing values/contributions. To address this issue, we propose that the server allocates global gradients to each agent based on their individual value, a mechanism termed model gradients allocation.

To achieve this, we employ a \emph{sparsifying gradient approach} for model gradients allocation, drawing inspiration from prior works like \cite{lyu2020collaborative, xu2020reputation, xu2021gradient}. This strategy entails rewarding agents who contribute more with denser gradient rewards, while those with less contributions receive sparser gradients. Agents who potentially do harm, in turn, are assigned zero gradients.

Specifically, to differentiate the quality of the gradients allocated to different agents, we can selectively sparsify the server's aggregated global gradient.
The sparsifying trick is achieved by a mask operation: when an agent's value is higher, we zero out fewer of the smaller components  of the global gradient, resulting in a higher-quality gradient reward.
The gradient  downloaded by agent $i$ in $t$-th communication round with value $r_{i}^t$  is
\begin{equation}
\text{gradient}_i^t=\text{mask}(\mathbf{u}_{\mathcal{N}}^t,\lfloor D \tanh (\beta r_i^t)/\max_{j\in \mathcal{N}}\tanh(\beta r_{j}^t) \rfloor),
\label{reward}
\end{equation}
where the operator $\text{mask}(\mathbf{u}_{\mathcal{N}}^t,x)$ returns the largest $\max(0,x)$ components of $\mathbf{u}_{\mathcal{N}}^t$, $D$ is the total number of components in the global gradient.
The hyper-parameter $\beta \geq 1$ controls the emphasis of fairness in FL: a smaller $\beta$ indicates a higher emphasis on fairness, as in this case agents with lower values will receive gradients of lower quality. In the extreme case of $\beta = \infty$, we revert to the vanilla FL.

Note that in scenarios where an agent offers no contribution or potentially do harm (\emph{i.e.}, $r_i^t \leq 0$), no gradient is allocated to it. This policy is adopted due to the impracticality of using model gradients to penalize harmful agents, for example, by applying inverse gradients, as doing so would contradict fundamental principles of federated learning.

\vpara{Payoff allocation.} 
To empower the framework's ability to penalize agents and compensate those with delayed contributions, we propose a scheme to allocate payoff  (\emph{e.g.}, money or computation resources).

On one hand, agents whose actions may have a negative impact on the federation, are subjected to penalties in payoff. For agent $i$ in the $t$-th round, if $r_i^t<0$, indicating that the agent is detrimental to the overall federation, we impose a payoff punishment $\text{payoff}_i^t=r_i^t < 0$ on the agent.

On the other hand, agents with delayed contributions  are provided payoff compensation  based on their performance history. 
This ensures that agents who have faced delays in their contributions are still acknowledged and rewarded accordingly.
By examining the agent values in previous rounds, we can estimate the agent's delayed contribution.
Specifically, we consider the agents, who have initially lower values but catch up in value over time, as those experiencing delayed contribution. To compensate,
we allocate the payoff compensation for agent $i$ in the $t$-th round (denoted as $\mu^t_i$) as the difference between the value in the current round and the average value of the previous rounds:
\begin{equation}
    \mu_i^t=\max(r_i^t - \frac{1}{t-1}\sum_{m=1}^{t-1} r_i^m,0).
\end{equation}

Given the agent value and the calculated payoff compensation, the payoff to agent $i$ in $t$-th communication round is 
\begin{align}
\begin{aligned}
    \text{payoff}_i^t=\left\{\begin{array}{ll}
r_i^t, &r_i^t < 0  \\
r_i^t+\mu_i^t, &\text {otherwise}
\end{array}\right.
\end{aligned},\\ \text{payoff}_i^t \leftarrow \frac{\text{payoff}_i^t}{\sum_{i=1}^N \text{payoff}_i^t}B,
\end{align}
where $B$ is the budget of payoff.

Given the overall framework above, two challenges remain to be solved: (1) how to define the agent value  $r_i^t$; (2) while the allocation mechanism contributes to fairness, it should also guarantee the model accuracy. We will address these two problems in the following two sections.

\vspace{-0.10in}
\subsection{Agent Valuation Function} \label{subsec: reputation}

The value of an agent is typically determined by its accuracy on an auxiliary validation set~\cite{jia2019towards, song2019profit, wang2020principled}. However, selecting a validation set that is accessible and agreed upon by all agents can be challenging. To decouple the agent valuation from validation, we introduce two criteria: \emph{gradient alignment} and \emph{graph diversity}.

\vpara{Gradient alignment.}
Most approaches that utilize the Shapley value in FL typically define the value function based on an auxiliary validation set that is shared and agreed upon by all agents. 
To overcome the challenge and inspired from~\cite{xu2020reputation, xu2021gradient}, we propose to utilize the gradient information as the value function  for computing the Shapley value, namely \emph{gradient-based Shapley value}, instead of using an auxiliary validation set.
The gradient-based Shapley value is defined as
that in Definition~\ref{def:shapley} by
assigning the value function $\nu$ to be
\begin{equation}    \nu(\mathcal{S})=\cos(\mathbf{u}_{\mathcal{S}},\mathbf{u}_{\mathcal{N}})=\left<\mathbf{u}_{\mathcal{S}},\mathbf{u}_{\mathcal{N}}\right>/(||\mathbf{u}_{\mathcal{S}}||\cdot||\mathbf{u}_{\mathcal{N}}||),
\end{equation}
where $\nu(\mathcal{S})$  is defined as the cosine similarity between the gradient of coalition $\mathcal{S}$ (\emph{i.e.}, local gradient from agent), denoted as $\mathbf{u}_{\mathcal{S}}$, and the gradient of the grand coalition $\mathcal{N}$  (\emph{i.e.}, global gradient in server), denoted as $\mathbf{u}_{\mathcal{N}}$. 
The gradient-based Shapley value measures the gradient contribution of each agent in the federation. 
A positive value indicates that the agent makes a greater contribution, as its gradient is positively aligned with the global gradient. A negative value implies that the agent's gradient is in the opposite direction of the global gradient. In this case, the agent is considered as potentially detrimental to the federation.

However, calculating the exact gradient-based Shapley value of an agent costs $\mathcal{O}(2^N)$. To address this issue, we find that cosine similarity between the agent's local gradient and the server's global gradient could be used as an approximation. 
The error between approximated value and exact value can be bounded, as illustrated in~\cite{xu2021gradient}.
Therefore, the gradient-based Shapley value of agent $i$ in  $t$-th communication round is approximated as
\begin{equation}
  {\varphi}_i^t  \approx \cos(\mathbf{u}_i^t, \mathbf{u}_{\mathcal{N}}^t),
    \label{equ:reputation}
\end{equation}
where $\mathbf{u}_i^t$ is the local gradient that agent $i$ uploads to the server in communication round $t$, and $\mathbf{u}_{\mathcal{N}}^t$ is the global gradient in communication round $t$.
The formal theorem and proofs can be found in Appendix \ref{subsec:thm}.

\vpara{Graph diversity.}
Merely considering gradient alignment is insufficient for two reasons.
First, agents with delayed contributions would face undervaluation if their assessment were solely based on gradient alignment. This occurs because agents equipped with diverse graph data hold substantial potential for delayed contributions, yet they may struggle to attain perfectly aligned gradients initially (as depicted in Figure~\ref{fig:obs1}(b)).
Second, diverse graphs encompass a wide range of structural patterns that can be universally shared among agents, leading to better generalization~\cite{tan2023federated}. 
Consequently, to provide a more comprehensive valuation of agents, we additionally introduce the criterion of graph diversity.

However, quantifying the diversity of graph data is not straightforward due to the complex structure. {Existing works use the number of subgraphs, data size or heuristic measures to quantify the diversity of graph \cite{yuan2021explainability,frasca2021understanding,xu2023better}, but overlook the inherent structural patterns within graph.} To address this issue, we propose the use of motifs as a means to represent the diversity of graph data.  Motifs are representative structural patterns in graphs that can reveal important information about underlying structure \cite{zhang2021motif}. Therefore, for each agent, we define graph diversity as the volume of motifs contained in agent's local graph. 

\begin{definition}[Graph Diversity]
    The graph diversity $d_i$ of agent $i$ is defined as the ratio of the number of unique motifs categories in the agent's local graph data $k_i$ to the number of unique motifs categories in the entire data contributed by all agents K, which can be formulated as $d_i=\frac{k_i}{K}$.
\end{definition}
Note that the motif information necessary for computing graph diversity (\emph{i.e.}, the motif prototypes elaborated in \S~\ref{subsec: quality}) will be communicated between the agents and the server, rather than transmitting the actual graph diversity values. 

Another advantage of introducing graph diversity is that it can help prevent agents from falling into the trap of converging to similar models. This is because, from the perspective of game theory, if only model gradients are allocated, agents might fall into this trap.

\vpara{Agent value updates.} Finally, \emph{the value of agent} is defined  by considering two aspects: (1) the incorporation of graph alignment and graph diversity, and (2) the consideration of both the current round's assessment and the historical assessment.
First, the graph alignment $\varphi_i^t$  and graph diversity $d_i$ are combined by weight.
Then, to better assess the agent value, both the historical assessment and the value on current round should be considered. Specifically, the server updates the agent value in round $t$ via a moving average of the current value and the historical value $r_i^{t-1}$. To ensure that the values of all agents sum up to 1 in $t$-th round, the server normalizes the agent values in the last step. In summary, the update of agent value could be decomposed as: 
\begin{equation}
    r_i^t= r_i^{t-1}+ \alpha_1 ({\varphi}_i^t + \alpha_2 d_i),
    \ r_i^t\leftarrow r_i^t/\sum_{j\in\mathcal{N}}r_{j}^t,
    \label{value}
\end{equation}
where {$\alpha_1$} can be viewed as a trade-off of the value from current round and the historical rounds, $\alpha_2$ denotes a trade-off parameter between graph alignment and graph diversity, and {$r_i^0=1/N$} is the initial value of the agent.

\vspace{-0.05in}

\subsection{ Model Quality Enhancement} \label{subsec: quality}

In graph federated learning, striking a balance between fairness and accuracy is of paramount importance, particularly when considering agents that have the potential to compromise the model performance  due to the inherent heterogeneity in graph data. To ensure the high-accuracy federated model, there is a need for establishing a reference coordinate that can be communicated between the server and the agents.
This is necessary for the server to convey its intentions and requirements to the agents, while offering agents  the opportunity to  refine their local models and strive for improvement.

This section first introduces a concept of motif prototype as such  a reference coordinate, then presents how the server and agents utilize  motif prototype for self enhancement.

\subsubsection{Motif prototypes.}
The underlying reason graph federated learning suffers from poor accuracy lies in the challenge of transferring knowledge across different agents' graph data. This difficulty is hard to address by simply aggregating the gradients of local models on the server, as did in most FL methods.
Considering these limitations, we introduce  {motifs}, which are sub-structures with rich structural information \cite{milo2002network}. As a specific type of motif contains similar structural information across graphs, motifs can effectively operate as transferable patterns across graphs, even when encountered in agents with varied data distributions.
With this  transferable pattern, we introduce the concept of motif prototypes, which facilitate the transfer of knowledge across agents, ultimately resulting in enhanced accuracy.
We formally define the motif prototypes as follows:

\begin{definition}[Motif Prototypes]
For agent $i$, suppose that the agent's local data involve  $K_i$ unique motifs. For the $k$-motif in agent $i$, we define the corresponding motif prototype as the mean of the embedding vectors of the graph instances containing $k$-th motif, \emph{i.e.},
\begin{equation}
    \mathbf{c}_{i,k}^t=\frac{1}{|D_{i,k}|}\sum_{G\in D_{i,k}}f_{\omega_i^t}(G),
    \label{local}
\end{equation}
where $D_{i,k}$ is the subset of $D_i$ that is comprised of graph instances that contain $k$-th motif, $f_{\omega_i^t}(G)$ and $\omega_i^t$ are the embedding of graph instance $G$ and the parameter of embedding layers of agent $i$ in $t$-th round, respectively.

\end{definition}

In each communication round,  except for model gradients, the motif prototypes are also communicated between the server and agents to ensure a high-accuracy federated model. 
Specifically,  the agents upload their local motif prototypes to the server. After receiving the local motif prototypes from the agents, the server then aggregates the local motif prototypes from all agents to obtain the global motif prototypes. Once the global motif prototypes are obtained, the server distributes them back to the agents.

It is noteworthy that the communication of prototypes would not entail much privacy leakage. This is because motif prototypes are 1D-vectors derived via the computation of average statistics from the low-dimensional representations of graph instances, which is an irreversible process~\cite{Michieli2021PrototypeGF,tan2022fedproto}.

\vpara{Value-based global model aggregation.}
To ensure the quality of the global model, we propose a value-based global model aggregation approach.  This approach aims to aggregate both the introduced motif prototypes and the gradients of agents based on their value $r_i^t$.
Specifically, we assign higher weights to agents with higher agent values, while excluding agents with negative values to prevent potential harm to the global model.

We first introduce the aggregation process for motif prototypes at the server.
Suppose that there are $K$ unique motifs in total. We aggregate the local motif prototypes from the agents based on their values $r_i^t$. Specifically, the global motif prototype of $k$-th motif in $t$-th round is defined as 
\begin{equation}    \mathbf{c}_{\mathcal{N},k}^t=\frac{\sum_{i\in\mathcal{N}_k}\text{ReLU}(r_i^t)\cdot \mathbf{c}_{i,k}^t}{\sum_{i\in\mathcal{N}_k}\text{ReLU}(r_i^t)},
    \label{global}
\end{equation}
where $\mathcal{N}_k$ denotes the set of agents that have motif $k$.

A similar strategy is employed when aggregating model gradients. 
The global gradient in the server for the $t$-th communication round, denoted as $\mathbf{u}_{\mathcal{N}}^t$, can be aggregated as 
\begin{equation}
    \mathbf{u}_{\mathcal{N}}^t = \frac{\sum_{i=1}^m\text{ReLU}(r_i^t)\cdot \mathbf{u}_i^t}{\sum_{i=1}^m\text{ReLU}(r_i^t)},
    \label{aggre}
\end{equation}
where the ReLU function plays a crucial role in the aggregation process by excluding agents with negative effects on the federation.

\vpara{Local model training.}
The global motif prototype serves as an instruction to guide the agents in updating their models in a desired direction.  This also  encourages non-rewarded agents to proactively identify and rectify their local issues before uploading gradients with the server. For example, agents with initially lower values  $r_i^t$ have the opportunity to increase their value by following the guidance provided by the global motif prototype.

In order to facilitate this process, we introduce a regularization term that encourages the local motif prototype $\mathbf{c}_{i,k}^t$ to approach the global motif prototype $\mathbf{c}_{\mathcal{N},k}^t$ associated with motif $k$. Accordingly, the local loss on agent $i$ can be formulated as follows:
\begin{equation}
    L(\omega_i, D_i) = {L_S(F(D_i), Y)}+\lambda \sum_k d(\mathbf{c}_{i,k}^t, \mathbf{c}_{\mathcal{N},k}^t),
    \label{loss}
\end{equation}
where 
$L_S$ is the supervised loss 
that measures the discrepancy between the model predictions $F(D_i)$ and the ground truth $Y$ on the local data of agent $i$, $\lambda$ is the trade-off parameter between the supervised loss $L_S$ and the motif prototypes-based  regularization, and $d$ is the L2 distance metric.

\vspace{-0.12in}
\section{Experiments}
\vspace{-0.05in}
In the experiments, we evaluate our model in graph classification task. We report the personalized accuracy and the accuracy of global model. Besides, we evaluate the model gradient fairness and payoff allocation fairness. Other results, such as ablation study of our model and parameter sensitivity, can be found in Appendix \ref{subsec:addexp}.
\vspace{-0.3cm}

\subsection{Experimental Setup}
\vspace{-0.1in}
\vpara{Datasets.} We use three graph classification datasets: PROTEINS\cite{Borgwardt2005ProteinFP}, DD \cite{dobson2003distinguishing}, and IMDB-BINARY \cite{Morris2020TUDatasetAC}. The first two are molecule datasets, while the last one is social network, each containing a set of graphs. 
We retain 10\% of all the graphs as the global test set for the server, and the remaining graphs are distributed to 10 agents. In each agent, we randomly split 90\% for training and 10\% for testing. 
In addition, for the payoff fairness, we introduce another setting using perturbed graph dataset. Specifically, for the graphs in each agent, we add varying ratio of perturbations to the structure. We designate 3 agents as low-quality, 3 agents as medium-quality, and 4 agents as high-quality by flipping ratios of $[0.7, 1)$, $[0.3, 0.7)$, and $[0, 0.3)$ of the total number of edges, respectively. 

\vpara{Baselines.} We compare with two kinds of baselines: FL methods and payoff allocation approaches. 
For FL methods, we compare against  (1) \textbf{Self-train}, where the agents train their models only on their local datasets; (2) two commonly used FL baselines \textbf{FedAvg} \cite{mcmahan2017communication} and \textbf{FedProx} \cite{li2020federated}, (3) \textbf{FedSage} \cite{zhang2021subgraph}, a graph FL framework that adopts FedAvg with GraphSage encoder \cite{hamilton2017inductive}; (4) \textbf{GCFL} and \textbf{GCFL+} \cite{xie2021federated}, two methods that utilize graph clustering to improve the effectiveness of graph FL; (5) \textbf{DW and EU} \cite{xu2021gradient}, incentive mechanisms that allocate model gradient using sparsifying gradient approach based on local data sizes and distance between local and global gradient respectively;
(6) \cite{xu2021gradient} and \textbf{CFFL} \cite{xu2020reputation}, incentive-based FL models in image domain (here we adapt them to our setting by substituting the encoder with the GNN used in our model). 

For the payoff allocation approaches, follow \cite{Gao2021FIFLAF}, we set the volume-based payoff function of agent $i$ as $\varPhi(i)=\log(1+\left|D_i\right|)$, and  compare against classical baselines such as (1) \textbf{equal incentive} \cite{yang2017designing}, where participants get equal payoff, \emph{i.e.}, $\varPhi(i)=1/N$; (2) \textbf{union incentive} \cite{gollapudi2017profit}, where the payoff proportion of agent $i$ is $\varPhi(\mathcal{N})-\varPhi(\mathcal{N}\backslash \{i\})$; (3) \textbf{individual incentive} \cite{yang2017designing}, where the payoff proportion of agent $i$ is $\varPhi(i)$; and (4) \textbf{Shapley incentive} \cite{shapley1997value}, where the payoff proportion of agent $i$ is served as Eq. \eqref{shapley}, where the value function $\nu$ is substituted with $\varPhi(i)$. The actual payoff for each agent can be calculated as $\dfrac{\varPhi(i)}{\sum_{i=1}^{N}\varPhi(i)}B$, where $B$ is the budget for federated learning.

\vpara{Metrics.} We evaluate our model's performance on the classification accuracy, as well as the fairness among agents. For graph classification accuracy, we consider personalized accuracy and global accuracy. Personalized accuracy is the average accuracy of the agents on the testing sets of their local graph.
Global accuracy is measured on the global test set.

To evaluate the fairness of our allocation, we consider two perspectives: model gradient and payoff. 
To measure the \textbf{fairness of model gradient}, we calculate the Pearson correlation coefficient $\rho(\xi,\psi)$ between the test accuracies $\xi$ achieved through self-training and that $\psi$ achieved by agents when collaborating in FL, following \cite{xu2021gradient}.  
To measure the \textbf{fairness of payoff}, the allocation approach could be assessed based on the agents with constructed low, medium, and high-quality data. If agents with high-quality data receive a larger payoff, it indicates a better payoff allocation approach.

\vpara{Implementation details.} 
Following the settings in \cite{xie2021federated}, we evaluate on the federated graph classification task. 
We set the parameters $\alpha_1$ and $\alpha_2$ in Eq. \eqref{value} as 0.05 and 1, the parameter $\lambda$ in Eq.~\eqref{loss} as 0.1,  $\beta$ in Eq. \eqref{reward} as 1, and  the budget $B$ of payoff as 1 (the budget $B$ in payoff allocation baselines is also set as 1).
We utilized a three-layer GIN network with a hidden size of 64 and a dropout rate of 0.5 for both the server and agent models. An Adam optimizer with a learning rate of 0.001 and weight decay of $5e^{-4}$ is employed.
The communication round is 200, the epoch of local training on agents is  1 and the batch size is 128. Besides, we utilized the motif extraction method in \cite{yu2022molecular}. See more details in Appendix \ref{subsec:implement}.
 Our codes are available at \url{https://github.com/zjunet/FairGraphFL}.

\vspace{-0.10in}

\begin{table*}[h]
\centering
\setlength{\tabcolsep}{2pt}
\renewcommand{\arraystretch}{1.2}
\normalsize
\vspace{-0.7em}
\resizebox{2\columnwidth}{!}{
\begin{tabular}{l|ccc|ccc|ccc}
\hline
\multicolumn{1}{l|}{} & \multicolumn{3}{c|}{PROTEINS}       & \multicolumn{3}{c|}{DD}                               & \multicolumn{3}{c}{IMDB-BINARY}                                                              \\ \cline{2-10} 
\multicolumn{1}{l|}{Dataset} & \makecell{model gradient\\ fairness} & \makecell{global\\ accuracy} & \multicolumn{1}{c|}{\makecell{personalized\\ accuracy}} &  \makecell{model gradient\\ fairness} & \makecell{global\\ accuracy} & \multicolumn{1}{c|}{\makecell{personalized\\ accuracy}}  & \makecell{model gradient\\ fairness} & \makecell{global\\ accuracy} & \multicolumn{1}{c}{\makecell{personalized\\ accuracy}}\\ \hline
\multicolumn{1}{l|}{Self-train}                  & \multicolumn{1}{c:}{-}      & \multicolumn{1}{c}{-}        &0.712$\pm$0.011          & \multicolumn{1}{c:}{-}             & \multicolumn{1}{c}{-}       & \multicolumn{1}{c|}{0.587$\pm$0.016}        &            \multicolumn{1}{c:}{-}                   &    \multicolumn{1}{c}{-}                         &   0.779$\pm$0.018                \\ \hline
\multicolumn{1}{l|}{FedAvg}                      &  \multicolumn{1}{c:}{0.700$\pm$0.153}    &  \multicolumn{1}{c}{0.748$\pm$0.006}     &    0.750$\pm$0.007      &   \multicolumn{1}{c:}{0.259$\pm$0.209}            & \multicolumn{1}{c}{0.668$\pm$0.032}       & 0.657$\pm$0.029        &   \multicolumn{1}{c:}{0.780$\pm$0.143}       & \multicolumn{1}{c}{0.779$\pm$0.007}            & 0.782$\pm$0.018                         \\
\multicolumn{1}{l|}{FedProx}                      & \multicolumn{1}{c:}{0.717$\pm$0.173} &  \multicolumn{1}{c}{0.731$\pm$0.017}      & 0.746$\pm$0.007           &   \multicolumn{1}{c:}{0.283$\pm$0.186}            & \multicolumn{1}{c}{0.633$\pm$0.025}       &   0.672$\pm$0.020      &  \multicolumn{1}{c:}{0.842$\pm$0.093}   &  \multicolumn{1}{c}{0.775$\pm$0.018}                          &    0.756$\pm$0.017                           \\ \hline
\multicolumn{1}{l|}{FedSage}                      & \multicolumn{1}{c:}{0.770$\pm$0.048}    &  \multicolumn{1}{c}{0.740$\pm$0.024} &  0.741$\pm$0.025         &   \multicolumn{1}{c:}{0.360$\pm$0.159}           & \multicolumn{1}{c}{0.671$\pm$0.021} & 0.663$\pm$0.023        &   \multicolumn{1}{c:}{0.870$\pm$0.086}      & \multicolumn{1}{c}{0.763$\pm$0.004}  &  0.768$\pm$0.003                           \\
\multicolumn{1}{l|}{GCFL}                    & \multicolumn{1}{c:}{0.725$\pm$0.185}    & \multicolumn{1}{c}{-}       &  0.772$\pm$0.019       &   \multicolumn{1}{c:}{0.439$\pm$0.094}           & \multicolumn{1}{c}{-}       & \textbf{0.698}$\pm$0.013        &   \multicolumn{1}{c:}{0.874$\pm$0.040}       & \multicolumn{1}{c}{-}             &  \textbf{0.830}$\pm$0.009                          \\
\multicolumn{1}{l|}{GCFL+}                    &  \multicolumn{1}{c:}{0.734$\pm$0.146}   &  \multicolumn{1}{c}{-}      &  \textbf{0.775}$\pm$0.019        &    \multicolumn{1}{c:}{0.379$\pm$0.190}          & \multicolumn{1}{c}{-}        & 0.692$\pm$0.013        &  \multicolumn{1}{c:}{0.825$\pm$0.094}      &  \multicolumn{1}{c}{-}            &   0.819$\pm$0.007 \\
\multicolumn{1}{l|}{FedStar}                    & \multicolumn{1}{c:}{0.763$\pm$0.043}   &  \multicolumn{1}{c}{-}      & 0.717$\pm$0.138         &   \multicolumn{1}{c:}{0.335$\pm$0.034}           & \multicolumn{1}{c}{-}        & 0.665$\pm$0.109        &    \multicolumn{1}{c:}{0.795$\pm$0.071}      &  \multicolumn{1}{c}{-}            &  0.776$\pm$0.086
\\ \hline

\multicolumn{1}{l|}{DW}       &  \multicolumn{1}{c:}{0.702$\pm$0.014}     & \multicolumn{1}{c}{0.723$\pm$0.014} &  0.730$\pm$0.014       &  \multicolumn{1}{c:}{0.305$\pm$0.326} & \multicolumn{1}{c}{0.672$\pm$0.031} &  0.669$\pm$0.033       &   \multicolumn{1}{c:}{0.723$\pm$0.089}       & \multicolumn{1}{c}{0.751$\pm$0.011}  &   0.762$\pm$0.009\\ 
\multicolumn{1}{l|}{EU}       &  \multicolumn{1}{c:}{0.733$\pm$0.016}    & \multicolumn{1}{c}{0.725$\pm$0.048} &  0.715$\pm$0.141       &  \multicolumn{1}{c:}{0.381$\pm$0.081}  & \multicolumn{1}{c}{0.668$\pm$0.010} & 0.658$\pm$0.382         &   \multicolumn{1}{c:}{0.747$\pm$0.010}      & \multicolumn{1}{c}{0.755$\pm$0.023}  &  0.759$\pm$0.069 \\
\multicolumn{1}{l|}{CFFL}       & \multicolumn{1}{c:}{0.690$\pm$0.117}    & \multicolumn{1}{c}{0.735$\pm$0.037} &  0.713$\pm$0.103       &  \multicolumn{1}{c:}{0.402$\pm$0.141}  & \multicolumn{1}{c}{0.658$\pm$0.053} & 0.658$\pm$0.112       &    \multicolumn{1}{c:}{0.795$\pm$0.100}      & \multicolumn{1}{c}{0.741$\pm$0.037}  &  0.752$\pm$0.034\\
\multicolumn{1}{l|}{\cite{xu2021gradient}}       &   \multicolumn{1}{c:}{0.750$\pm$0.040}  & \multicolumn{1}{c}{0.745$\pm$0.022} &  0.737$\pm$0.025         &  \multicolumn{1}{c:}{0.384$\pm$0.123}  & \multicolumn{1}{c}{0.682$\pm$0.011} &   0.659$\pm$0.013      &   \multicolumn{1}{c:}{0.796$\pm$0.085}       & \multicolumn{1}{c}{0.781$\pm$0.014}  &   0.778$\pm$0.019  \\
\hline
\multicolumn{1}{l|}{ours}                     &\multicolumn{1}{c:}{\textbf{0.787}$\pm$0.052}    & \multicolumn{1}{c}{\textbf{0.753}$\pm$0.018} &  0.751$\pm$0.017       &  \multicolumn{1}{c:}{\textbf{0.479}$\pm$0.067}      &   \multicolumn{1}{c}{\textbf{0.692}$\pm$0.017}    &   0.680$\pm$0.017      &   \multicolumn{1}{c:}{\textbf{0.907}$\pm$0.018}      &  \multicolumn{1}{c}{\textbf{0.801}$\pm$0.013}  &  0.794$\pm$0.005                          \\

\Hide{-reput                     & 0.750$\pm$0.015     & 0.760$\pm$0.015&  0.770$\pm$0.011        &  0.677$\pm$0.020            &        & 0.312$\pm$0.203        &   0.775$\pm$0.015       &             &  0.843$\pm$0.056                           \\
reput-motif                    & 0.711$\pm$0.012     & 0.723$\pm$0.013 &  0.767$\pm$0.021        &  0.663$\pm$0.032            &  & 0.258$\pm$0.233        &   \textbf{0.753}$\pm$0.010       &             &  0.803$\pm$0.104                           \\ 
}
\hline
\end{tabular}
}
\vspace{-0.1in}
\captionsetup{font=small}
\caption{Accuracy and fairness performance on three clean graph datasets. We report the average test accuracy of all agents (denoted as personalized), the accuracy of global models on the global test dataset (denoted as global) and the performance fairness. ``-'' means these methods do not have a single global model on the server.
}
\label{tab:res2}
\vspace{-0.75cm}
\end{table*}

\subsection{Experimental Results} \label{subsec:exp}

\vspace{-0.1in}
\vpara{Accuracy and model gradient fairness.} Table \ref{tab:res2} present the accuracy and model gradient fairness on clean and perturbed graph datasets, respectively.  Our model demonstrates the best performance in terms of model gradient fairness and global accuracy, indicating a significant advantage in fairness without compromising overall performance.
Traditional FL methods like FedAvg and FedProx do not perform well in  both  classification accuracy and model gradient fairness in most cases. This emphasizes the need of a dedicated design encompassing both the graph federated learning model and the graph incentive mechanism.
Graph FL methods like GCFL, GCFL+, FedSage and FedStar obtain either unsatisfactory accuracy and fairness, or good accuracy but with compromised fairness. This deficiency stems from the absence of an incentive mechanism within these models.
{The results of incentive-based federated learning methods such as DW, EU, \cite{xu2021gradient} and CFFL are also non-ideal, as they are not specifically designed  for graph FL.}
{We also note the presence of a trade-off between model gradient fairness and personalized accuracy, which has also been observed and recognized in previous research~\cite{gu2022privacy,ezzeldin2023fairfed}; in the subsequent experiments, we proceed to further evaluate this trade-off.}

\vspace{-0.05in}
\vpara{Trade-off between personalized accuracy and model gradient fairness.}
We evaluate the trade-off between model gradient fairness and personalized accuracy by presenting the results of various algorithms in Figure \ref{result}(a). The ideal algorithm should be situated in the upper right corner of the figure, similar to where our algorithm is positioned, signifying both high accuracy and excellent fairness. This results effectively demonstrate the most remarkable trade-off achieved by our algorithm between personalized accuracy and model gradient fairness.
\vspace{-0.05in}

\vpara{Payoff fairness.} To evaluate the payoff fairness of our model,  we compare it with  different payoff allocation approaches. We conduct experiments on perturbed datasets to examine the payoff received by low,  medium and high-quality agents.
The results are shown in Figure \ref{result}(b), where the height of the histogram is the average amount of the payoff received by each agent per communication round. It clearly indicates that our payoff allocation mechanism exhibits the best payoff fairness, 
as it provides the highest payoff to agents with high-quality data, while assigning negative payoff to agents with low-quality data. In contrast, other allocation methods struggle to maintain fairness in FL.
\vspace{-0.2cm}

\begin{figure}[t]
    \small
    \centering
    \includegraphics[width=0.5\textwidth]{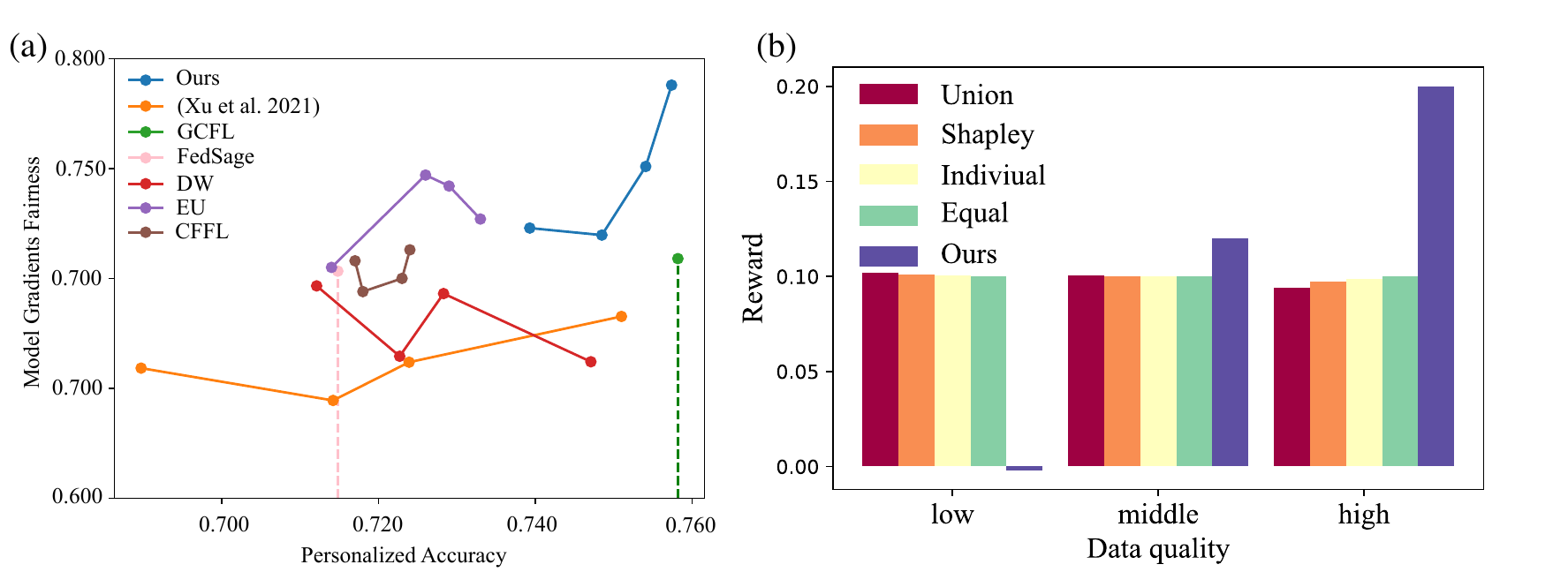}
    \vspace{-0.2in}
    \captionsetup{font=small}
    \caption{(a) The trade-off between model gradient fairness and personalized accuracy. (b) The average payoff per round for agents with varying data qualities.}
    \label{result}
    \vspace{-0.7cm}
\end{figure}

\Hide{
\begin{wraptable}{r}{9cm}
\vspace{0pt}
\centering
\begin{tabular}{c|ccc}
\hline
               & personalized & global & fairness \\ \hline
ours           & 0.751$\pm$0.017 & 0.753$\pm$0.018&0.787$\pm$0.052          \\
ours-value     &  0.757$\pm$0.014  & 0.759$\pm$0.014 &0.722$\pm$0.146    \\
ours-prototype & 0.737$\pm$0.025 & 0.745$\pm$0.022 & 0.750$\pm$0.040         \\
ours-diversity     & 0.744$\pm$0.009  & 0.748$\pm$0.022& 0.703$\pm$0.102 \\ \hline
\end{tabular}
\caption{Ablation study on PROTEINS.}
\label{ablation}
\vspace{-20pt}
\end{wraptable}

\vpara{Ablation Study.}
To demonstrate the effectiveness of each component within our model, we conducted ablation studies on three variants: ours-value, ours-prototype, and ours-diversity, which remove value assessment, prototype, and graph diversity in value assessment, respectively. The results are presented in Table \ref{ablation}. From the table, it can be observed that removing the prototype and motif components leads to a decrease in algorithm accuracy to some extent, and our algorithm experiences a significant decline in fairness. While the absence of the value assessment component slightly improves algorithm accuracy, it may not guarantee fairness. In summary, all parts of our algorithm are crucial in achieving both accuracy and fairness.
}

\section{Related Works}
\vspace{-0.05in}
\vpara{Incentive mechanism.} 
Data value in FL is primarily assessed from two dimensions:  data quantity and data quality. 
Concerning data quantity, existing works use data size to measure agent contribution \cite{zhan2020big,zhan2020learning}. As for data quality, most works evaluate the contribution of agents with the Shapley Value \cite{shapley1997value}, and a validation set on the server that is agreed by all agents is needed to determine the value function \cite{jia2019towards,song2019profit,wang2020principled}.

Built upon the notion of data value, incentive mechanisms are established for FL to allocate model gradients or payoff.
\cite{xu2021gradient,xu2020reputation}  allocate model gradients based on the similarity between local gradients and global gradients.
Another line of works introduce payoff-sharing schemes, where \cite{Yu2020AFI} dynamically allocates payoff by optimizing collective utility while minimizing inequality and  \cite{Gao2021FIFLAF} considers the malicious agents.
However, none of them have considered the uniqueness in graph federated learning.

\vpara{Graph federated learning.}
 Graph federated learning is divided into graph-level, subgraph-level and node-level
 ~\cite{he2021fedgraphnn}. In subgraph-level graph FL, 
 each agent has a subgraph of a large entire graph
 \cite{zhang2021subgraph,wu2021fedgnn}. In node-level graph FL, each agent possesses the ego-networks of one or multiple nodes \cite{meng2021cross,wang2022graphfl,zheng2021asfgnn}.
Our work belongs to graph-level graph FL, where each agent holds a set of graphs.  In this context, \cite{xie2021federated} introduces a clustered FL to deal with feature and structure heterogeneity.
\cite{tan2023federated} separates the graph structure and features, sharing only the structural information across agents while retaining features for local training by each agent.
\cite{gu2023dynamic} introduces a dynamic approach for 
 selecting clients and model gradients to enhance efficiency and accuracy.
\cite{he2021spreadgnn} proposes a multi-task
learning framework that eliminates the necessity for a central server.
However, none of them consider the fairness in graph federated learning.

 \vspace{-0.25cm}
\section{Conclusion}
\vspace{-0.03in}
In this paper, we begin by uncovering a unique phenomenon in graph federated learning: the presence of agents causing potential harm and agents contributing with delays. 
In light of this, we presents a novel incentive mechanism for fair graph federated learning framework, combining incentives from both model gradients and payoff. 
To achieve the framework, we introduce a agent valuation function considering both gradient alignment and graph diversity, and then we enhance the accuracy-fairness trade-off by introducing a novel concept of motif prototypes.
Extensive experiments demonstrate our superiority in both accuracy and fairness.

\section*{Acknowledgement}
This work is supported by NSFC (62206056, 12271479), Zhejiang NSF (LR22F020005) and the Fundamental Research Funds for the Central Universities.
\bibliography{aaai24}

\appendix
\onecolumn
\section{Appendix}
\subsection{Additional Theorems} \label{subsec:thm}
Let $\Pi_{\mathcal{N}}$ denote a set of all permutations of all possible permutations of $\mathcal{N}$. For a given permutation $\pi\in \Pi_{\mathcal{N}}$, $S_{\pi_i}$ represents the coalition of agents preceding agent $i$ in the permutation. Then the gradient-based Shapley value can be defined as follows:

\begin{align}
    \varphi_i :&= \frac{1}{N!}\sum_{\pi \in \Pi_{\mathcal{N}}}\left[\nu\left(\mathcal{S}_{\pi, i} \cup\{i\}\right)-\nu\left(\mathcal{S}_{\pi, i}\right)\right], \\ \nu(\mathcal{S})&=\cos(\mathbf{u}_{\mathcal{S}},\mathbf{u}_{\mathcal{N}})=\left<\mathbf{u}_{\mathcal{S}},\mathbf{u}_{\mathcal{N}}\right>/(||\mathbf{u}_{\mathcal{S}}||\cdot||\mathbf{u}_{\mathcal{N}}||),\notag
\end{align}
where $\mathbf{u}_{\mathcal{S}}$ denotes the gradient of coalition $\mathcal{S}$, and $\mathbf{u}_{\mathcal{N}}$ denotes the gradient of grand coalition $\mathcal{N}$. As the calculation process above needs $O(2^{|\mathcal{N}|})$ time complexity, we could use $\zeta_i=\cos(\mathbf{u}_i,\mathbf{u}_{\mathcal{N}})$ as an approximation , where $\mathbf{u}_i$ is the gradient that agent $i$ uploads in communication $t$. The approximation error theorem can be formulated as follows:
\begin{theorem}[Approximation Error]
    Let $I\in \mathbb{R}^+$. Suppose that $||\mathbf{u}_i^t||=\tau$ and $\left|\left<\mathbf{u}_i,\mathbf{u}_{\mathcal{N}}\right>\right|\leq 1/I$ for all $i\in \mathcal{N}$. Then $\varphi_i-L_i\zeta_i\leq I\tau^2$, where the multiplicative factor $L_i$ can be normalized.
\end{theorem}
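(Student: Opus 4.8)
The plan is to exploit the scale-invariance of cosine similarity together with the additive structure of the coalition gradient. Writing the coalition gradient as the (weighted) sum $\mathbf{u}_{\mathcal{S}} = \sum_{j\in\mathcal{S}} w_j\mathbf{u}_j$ with $w_j = |D_j|/|D|$, adding agent $i$ changes it by the single vector $w_i\mathbf{u}_i$. Since $\cos(\cdot,\cdot)$ is invariant to positive rescaling, I may work with these unnormalized sums directly. Fixing a permutation $\pi$ and abbreviating $\mathbf{s} := \mathbf{u}_{\mathcal{S}_{\pi,i}}$ and $\mathbf{g} := \mathbf{u}_{\mathcal{N}}$, each Shapley summand becomes $\cos(\mathbf{s}+w_i\mathbf{u}_i,\mathbf{g}) - \cos(\mathbf{s},\mathbf{g})$. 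Treating $h(\mathbf{x}) := \cos(\mathbf{x},\mathbf{g})$ as a smooth function of its first argument, I would apply a first-order Taylor expansion about $\mathbf{s}$, yielding a linear term $\langle \nabla h(\mathbf{s}), w_i\mathbf{u}_i\rangle$ plus a second-order remainder $R_\pi$.

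The next step is to extract the leading term. The explicit gradient is $\nabla h(\mathbf{x}) = \mathbf{g}/(\|\mathbf{x}\|\,\|\mathbf{g}\|) - \langle\mathbf{x},\mathbf{g}\rangle\,\mathbf{x}/(\|\mathbf{x}\|^3\,\|\mathbf{g}\|)$, whose dominant component is aligned with $\mathbf{g}$; the linear term therefore contributes $w_i\langle\mathbf{u}_i,\mathbf{g}\rangle/(\|\mathbf{s}\|\,\|\mathbf{g}\|)$, and using $\|\mathbf{u}_i\|=\tau$ this equals $(w_i\tau/\|\mathbf{s}\|)\,\zeta_i$ since $\langle\mathbf{u}_i,\mathbf{g}\rangle = \tau\|\mathbf{g}\|\,\zeta_i$. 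Averaging over all permutations, the coalition-dependent scalar $w_i\tau/\|\mathbf{s}\|$ does not involve the direction of $\mathbf{u}_i$, so its permutation average is a constant that I would collect into the factor $L_i$. This is exactly the multiplicative factor appearing in the statement, and it explains the remark that $L_i$ ``can be normalized'': it is a common scaling shared across agents that disappears after the value vector is renormalized to sum to one.

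It then remains to bound the accumulated error $|\varphi_i - L_i\zeta_i|$, which is the permutation average of the remainders $R_\pi$ together with the projection term $\langle\mathbf{s},\mathbf{g}\rangle\langle\mathbf{s},\mathbf{u}_i\rangle/(\|\mathbf{s}\|^3\|\mathbf{g}\|)$. Here I would invoke the two hypotheses directly: $\|\mathbf{u}_i\| = \tau$ controls the Taylor step size, so each second-order remainder is $O(\|w_i\mathbf{u}_i\|^2) = O(\tau^2)$ via the curvature of $h$; and the assumption $|\langle\mathbf{u}_i,\mathbf{u}_{\mathcal{N}}\rangle|\le 1/I$ bounds the alignment-dependent coefficients multiplying these remainders. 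Combining a Cauchy--Schwarz estimate $|\langle\mathbf{s},\mathbf{u}_i\rangle|\le\|\mathbf{s}\|\,\tau$ on the cross term with the second-order curvature bound, the total error collapses to the claimed $I\tau^2$.

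The main obstacle will be the coalition-dependent normalizers $\|\mathbf{s}+w_i\mathbf{u}_i\|$, which vary across permutations and couple $\mathbf{u}_i$ to the preceding coalition $\mathbf{s}$, potentially spoiling the clean factorization into $L_i\zeta_i$. Rather than evaluating the permutation average in closed form, I would show that $1/\|\mathbf{s}+w_i\mathbf{u}_i\|$ deviates from $1/\|\mathbf{s}\|$ by a quantity of order $\|w_i\mathbf{u}_i\| = O(\tau)$ and absorb this deviation into the $O(\tau^2)$ remainder budget. This keeps the argument at the level of first-order geometry plus uniform second-order bounds, so that the nonlinearity never needs to be resolved exactly.
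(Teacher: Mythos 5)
The first thing to note is that the paper contains no proof of this theorem at all: it states it and writes that ``the detailed proof can be found in \cite{xu2021gradient}'', so your attempt has to be measured against the argument in that cited source. That argument is an \emph{exact algebraic} decomposition, not a Taylor expansion: with $\mathbf{g}=\mathbf{u}_{\mathcal{N}}$ and $\mathbf{u}_{\mathcal{S}\cup\{i\}}=\mathbf{u}_{\mathcal{S}}+\mathbf{u}_i$, one has the identity
\begin{equation*}
\cos(\mathbf{u}_{\mathcal{S}\cup\{i\}},\mathbf{g})-\cos(\mathbf{u}_{\mathcal{S}},\mathbf{g})
=\frac{\tau\,\zeta_i}{\|\mathbf{u}_{\mathcal{S}\cup\{i\}}\|}
+\frac{\langle\mathbf{u}_{\mathcal{S}},\mathbf{g}\rangle\,\bigl(\|\mathbf{u}_{\mathcal{S}}\|-\|\mathbf{u}_{\mathcal{S}\cup\{i\}}\|\bigr)}{\|\mathbf{u}_{\mathcal{S}\cup\{i\}}\|\,\|\mathbf{u}_{\mathcal{S}}\|\,\|\mathbf{g}\|},
\end{equation*}
whose first summand averages over permutations to $L_i\zeta_i$, while the second (the error) is handled by the triangle inequality $\bigl|\|\mathbf{u}_{\mathcal{S}}\|-\|\mathbf{u}_{\mathcal{S}\cup\{i\}}\|\bigr|\le\tau$, Cauchy--Schwarz on $\langle\mathbf{u}_{\mathcal{S}},\mathbf{g}\rangle$, and---decisively---by playing the inner-product condition against the \emph{denominators}, \emph{i.e.}, bounding the coalition norm from below via $\|\mathbf{u}_{\mathcal{S}\cup\{i\}}\|\ge|\langle\mathbf{u}_{\mathcal{S}\cup\{i\}},\mathbf{g}\rangle|/\|\mathbf{g}\|$. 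That last step is the only place a factor of $I$, and hence a bound of the shape $I\tau^2$, can be produced.

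Your sketch misses exactly that step, and the gap is not cosmetic. Since $\cos(\cdot,\mathbf{g})$ is invariant under positive rescaling of its argument, the quantities $\varphi_i$, $\zeta_i$, $L_i$, and hence $\varphi_i-L_i\zeta_i$ are unchanged when every gradient is multiplied by $c>0$; therefore no bound of the form ``universal constant times $\tau^2$'' can hold, and both of your error estimates are illusory. Concretely: the Hessian of $h(\mathbf{x})=\cos(\mathbf{x},\mathbf{g})$ scales as $1/\|\mathbf{x}\|^2$ while the base point satisfies $\|\mathbf{s}\|=O(\tau)$, so the Taylor remainder is $O\bigl(w_i^2\tau^2/\|\mathbf{s}\|^2\bigr)=O(w_i^2)$, a dimensionless quantity, not $O(\tau^2)$; and the projection term, after your own Cauchy--Schwarz step, is bounded by $w_i\tau/\|\mathbf{s}\|$, which is precisely the magnitude of your leading term $(w_i\tau/\|\mathbf{s}\|)\zeta_i$---it is first order and cannot be absorbed into a second-order budget. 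The expansion is in any case uncontrolled for small coalitions, where $\|\mathbf{s}\|\lesssim w_i\tau$, and undefined at $\mathcal{S}=\emptyset$. Finally, you never use the hypothesis $|\langle\mathbf{u}_i,\mathbf{u}_{\mathcal{N}}\rangle|\le 1/I$ in a way that could close the argument: an upper bound on inner products only shrinks numerators, yielding errors of the shape $\tau/(I\cdot\text{norms})$, whereas obtaining $I\tau^2$ requires the inner products to enter through denominators, as lower bounds on coalition norms as above. Negotiating that inversion is the entire substance of the theorem, and the sketch leaves it untouched.
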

The detailed proof can be found in \cite{xu2021gradient}.

\subsection{Additional Implementation Details}

\label{subsec:implement}
\vpara{Motif extraction methods.}
The motif prototype proposed above needs to define the motifs and extract the motifs from the graphs. Researchers have proposed many motif extraction methods and here we select a typical one that is proposed in \cite{yu2022molecular}. In this method, the bonds and rings are selected for the motifs. We first give a formal definition of the bonds and rings that is used as the motifs.
\begin{definition}[Bonds]
    A bond is a tuple that contains the edge information and the nodes information at the start and the end of the edge that does not exist on a ring structure. Specifically, given the edge $E$ on the graph, the bond can be denoted as $((E[0], E[1]), e)$, where $E[0]$ and $E[1]$ are the labels of the start node and the end node of edge $E$, respectively, and $e$ is the edge label of $E$. If $e$ does not contain the label, then $e$ is labeled as `None'.
\end{definition}

\begin{definition}[Rings]
    A ring is a tuple that contains the edge information and the node information that is contained in a cycle structure of a graph. Specifically, the ring can be formulated as the tuple $(V,E)$, where $V$ is the tuple of the labels of the nodes in the cycle structure and is ordered clockwise, $E$ is the tuple of the labels of the edges in the cycle structure and is ordered clockwise. If the edges do not contain the label, then it is labeled as `None'.
\end{definition}
In molecular graphs, the bonds would be viewed as the connection types between two atoms, such as the single bond and double bond between two carbon atoms. The rings could be viewed as function groups, such as benzene rings. In other graphs, the rings represent that the nodes on the rings are in the same community, and the bonds would imply that the two nodes have connections. Therefore, the motif extraction method can be used in all kinds of graph datasets.

\Hide{In molecular graphs, the labels of edges could represent the chemical bond types such as single bond and double bond, while in other kinds of graphs, the edges often represent that the two nodes have connections. Then the edges in social network is often labeled as `None'.}

Note that some of the motifs may appear in most of the graphs and they would carry little information in representation learning. To reduce the influence of these motifs, we use the term frequency-inverse document frequency (TF-IDF) algorithm to select the most essential motifs in our motif vocabulary. The TF-IDF value of a motif $k$ in the graph $G$ is computed as
\begin{equation}
    T_{k,G}=C(k)_G\left(\log\frac{1+|D_i|}{1+|D_{i,k}|}\right)+1,
\end{equation}
where $C(k)_G$ denotes the time that motif $k$ appears in graph $G$. To get the TF-IDF value $T_k$ for a motif $k$, we calculate the TF-IDF value for all graphs containing the motif $k$ as follows:
\begin{equation}
    T_k=\frac{1}{|D_{i,k}|}\sum_{k=1}^{|D_{i,k}|}T_{i,G_{n_k}},
\end{equation}
where the sequence $\{n_k\}$ denotes the index of graphs that contains the motif $k$. After sorting the motif vocabulary by the TF-IDF value, we select the essential motifs based the some threshold $\beta_s\in (0,1)$. In this work, we keep $\beta_s=0.9$. Then we keep only the most essential motifs as the final motif vocabulary.

\vpara{Trade-off experiment settings} For our algorithm, \cite{xu2021gradient}, DW, EU \cite{xu2021gradient} and CFFL \cite{xu2020reputation} that are five frameworks that are customized for graph federated learning, we set the hyper-parameter $\beta$ to be 0.5, 1, 1.5, 2, respectively, and record the personalized accuracy and model gradient fairness. For GCFL and FedSage, we set the same parameter as mentioned in \cite{xie2021federated}.

\vpara{Other implementation details.}Following the settings in \cite{xie2021federated}, we use 10 agents in the experiment to ensure that each agent would have a total of around 100 graph instances. Our model is implemented under the following software settings: Pytorch version 1.10.0+cu111, CUDA version 11.1, Python version 3.8.8, numpy version 1.20.3 and networkx version 2.8.7. All of the experiments are conducted on a single machine of Linux system with an Intel Xeon Gold 5118 (128G memory) and a GeForce Tesla P4 (8GB memory). The implementation can be found at \url{https://github.com/zjunet/FairGraphFL}.

\vpara{Dataset details.} We provide detailed information about the datasets that we utilized in Table \ref{stat}, and give a brief introduction to these datasets.

\begin{table*}[h] 
\centering
\begin{tabular}{cccccc}
\hline
\multirow{2}{*}{dataset} & \multicolumn{5}{c}{statistics}                                                                                                                                         \\ \cline{2-6} 
                         & \multicolumn{1}{l}{\#graphs} & \multicolumn{1}{l}{avg. \#nodes} & \multicolumn{1}{l}{avg. \#edges} & \multicolumn{1}{l}{\#classes} & \multicolumn{1}{l}{node features} \\ \hline
PROTEINS      &   1113          &  39.06               &     72.82       &     2          & original                          \\
DD            &    1178           & 284.32                 &    715.66           & 2           & original                          \\
IMDB-BINARY       &     1000     &   19.77            &   95.53                &   2      & degree                            \\ \hline
\end{tabular}
\caption{The statistics of the datasets.}
\label{stat}
\end{table*}
\begin{itemize}
    \item PROTEINS \cite{Borgwardt2005ProteinFP} is a protein dataset consisting of enzymes and non-enzymes. In this dataset, each node represents an amino acid, and an edge connects two nodes if their distance is less than 6 Angstroms. 
    \item DD \cite{dobson2003distinguishing} is an organic molecule dataset, where each node represents an atom and edges represent the chemical bonds between atoms. Nodes have different attributes, such as the type of atom, partial charge of atoms, etc. 
    \item IMDB-BINARY \cite{yanardag2015deep} is a movie collaboration dataset comprised of the ego-networks of 1,000 actors/actresses who have participated in movies in IMDB. In each graph, the nodes represent actors/actresses, and an edge exists between them if they have appeared in the same movie. 
\end{itemize}

\vpara{Details about Figure \ref{fig:obs1}.} In Figure \ref{fig:obs1}(a), we conduct FedAvg in image dataset CIFAR 10 by splitting the dataset into 5 equal parts. In Figure \ref{fig:obs1}(b) and (c), we conduct the experiment in graph dataset PROTEINS by splitting the dataset into 5 equal parts, using the FedAvg and our model, respectively. The contribution of each agent in different rounds is calculated as the relative improvement in performance on the server's global test set when all agents participate compared to when this agent is excluded.

\subsection{Additional Experiments} \label{subsec:addexp}
\vpara{Ablation study.}
We present the result of ablation study for our model to demonstrate the importance of each component in our model. 
\begin{table}[b]
\centering
\begin{tabular}{c|ccc}
\hline
               & \begin{tabular}[c]{@{}c@{}}personalized\\ accuracy\end{tabular} & \begin{tabular}[c]{@{}c@{}}global\\ accuracy\end{tabular} & \begin{tabular}[c]{@{}c@{}}model gradient\\ fairness\end{tabular} \\ \hline
ours           &  0.751$\pm$0.017                           &  0.753$\pm$0.018     &    0.787$\pm$0.052
\\
ours-gradient  &      0.743$\pm$0.017    & 0.750$\pm$0.019   &   0.736$\pm$0.145                     \\
ours-diversity &   \multicolumn{1}{l}{0.744$\pm$0.009} &0.748$\pm$0.022 &   0.723$\pm$0.082                                                                \\
ours-model gradient     &    \multicolumn{1}{l}{0.757$\pm$0.014} & 0.751$\pm$0.014  &  0.710$\pm$0.146                                                                \\
ours-aggregation     &    \multicolumn{1}{l}{0.741$\pm$0.042} & 0.740$\pm$0.024  &  0.741$\pm$0.121                                                                \\
ours-prototype &  \multicolumn{1}{l}{0.737$\pm$0.025} &0.741$\pm$0.022 &  0.750$\pm$0.040                    \\ \hline
\end{tabular}
\caption{Ablation study on accuracy and fairness performance on clean PROTEINS dataset.}
\label{ablation}
\end{table}
The five variants of our model remove the gradient alignment, graph diversity, model gradient, value-based aggregation and motif prototype component, respectively. The results can be found in Table~\ref{ablation}.
We can observe that:
(1) Comparing our model with $\textbf{ours-gradient}$ that removes the gradient alignment,  we can see that the inclusion of gradient alignment enhances both performance and model gradient fairness. The improvement can be attributed to the fact that gradient alignment allows for a more accurate estimation of agent value, which, in turn, contributes to the enhancement of global model quality through value-based aggregation on the server.
(2) Comparing our model with $\textbf{ours-diversity}$, which removes the graph diversity component,  it is implied that the inclusion of graph diversity could  bring the enhancement on both the model gradient fairness and the performance as it may lead to a more accurate estimation on the value of agents.  Notably, the graph diversity component has a more significant impact on model gradient fairness compared to gradient alignment, suggesting its crucial role in assessing agent value.
\Hide{
(2) $\textbf{ours-diversity}$ is a variant removing the graph diversity from our model. We could see that graph diversity is important in the value assessment of the agents and brings a larger impact to the model gradient fairness compared to gradient alignment. Besides, the two components, gradient alignment and graph diversity, could give us a more accurate assessment of the value of agents and thus bring less noise into the federation. Therefore, the removal of any component would lead to the drop in accuracy. }
(3)
Compared with $\textbf{ours-model gradient}$ which gives all the agents the same gradient, we could see that the model gradient fairness would be significantly hurt, as all agents receive the same model irrespective of their contributions. However, it leads to improved personalized accuracy, as all agents receive the same global gradient. This phenomenon could be viewed as a trade-off between the accuracy and the model gradient fairness.
\Hide{$\textbf{ours-model gradient}$ denotes the exclusion of model gradient allocation, which may result in a significant decline in model gradient fairness, as all agents receive the same model irrespective of their contributions. However, it leads to improved personalized accuracy, as all agents receive the same global gradient. This phenomenon could be viewed as a trade-off between the accuracy and the model gradient fairness.} (4) $\textbf{ours-aggregation}$ refers to the variant that aggregates local gradients in the server as FedAvg \cite{mcmahan2017communication}. The inclusion of our value-based aggregation may lead to an increase in accuracy and a better performance in model gradient fairness. This is because that our value-based aggregation technique reduces noise in the federation, thereby enhancing model accuracy. Additionally, it demonstrates relatively good performance in terms of model gradient fairness, as it considers both gradient alignment and graph diversity to assess the value of agents and thus allocate the model gradient relatively fairly.
(5) \textbf{ours-prototype} is a variant excluding the prototype-related components, and this could lead to a severe drop in accuracy due to the heterogeneity of the graph data. 
Moreover, the absence of the prototype-related components adversely affect the performance of model gradient fairness, as these components 
plays a crucial role in ensuring model quality, which in turn helps us assess the value of agents  more accurately through gradient alignment. 

\Hide{We found that all the components our our model could help to enhance the model gradient fairness of our model. In addition, the absence of value assessment of the agent (\S \ref{subsec: reputation}) would help to improve the accuracy of our model to a little extent as the sparsification gradient allocation approach is not used, and most of the agents could get a better model. However, the absence of it would hurt the model gradient fairness, as all the agents would get the same gradient reward regardless of the contribution that they make to the whole framework.}

\Hide{
\begin{table*}[t]
\centering
\begin{tabular}{c|ccc}
\hline
               & \begin{tabular}[c]{@{}c@{}}personalized\\ accuracy\end{tabular} & \begin{tabular}[c]{@{}c@{}}global\\ accuracy\end{tabular} & \begin{tabular}[c]{@{}c@{}}model gradient\\ fairness\end{tabular} \\ \hline
ours           &  0.751$\pm$0.017                           &  0.753$\pm$0.018     &    0.787$\pm$0.052
\\
ours-gradient  &      0.743$\pm$0.017    & 0.750$\pm$0.019   &   0.736$\pm$0.145                     \\
ours-diversity &   \multicolumn{1}{l}{0.744$\pm$0.009} &0.748$\pm$0.022 &   0.703$\pm$0.102                                                                \\
ours-model gradient     &    \multicolumn{1}{l}{0.757$\pm$0.014} & 0.751$\pm$0.014  &  0.699$\pm$0.146                                                                \\
ours-aggregation     &    \multicolumn{1}{l}{0.746$\pm$0.023} & 0.749$\pm$0.024  &  0.698$\pm$0.082                                                                \\
ours-prototype &  \multicolumn{1}{l}{0.737$\pm$0.025} &0.745$\pm$0.022 &  0.750$\pm$0.040                    \\ \hline
\end{tabular}
\caption{Ablation study on accuracy and fairness performance on clean PROTEINS dataset.}
\label{ablation}
\vspace{0.1cm}
\end{table*}
}
\vpara{Hyper-parameter analysis.}
We did the hyper-parameter analysis on the trade-off parameter $\lambda$ between the supervised loss and the motif prototype-based regularization.
This is an important hyper-parameter in our proposed strategy. 
Figure~\ref{hyper} shows the effect of the trade-off parameter on global accuracy, personalized accuracy and model gradient fairness.
We can see that a too small or too large learning rate could deteriorate the accuracy and fairness performance, and the optimal performance can be obtained when $\lambda$ is 0.1.  
\begin{figure}[h]
    \small
    \centering
    \includegraphics[width=1.0\textwidth]{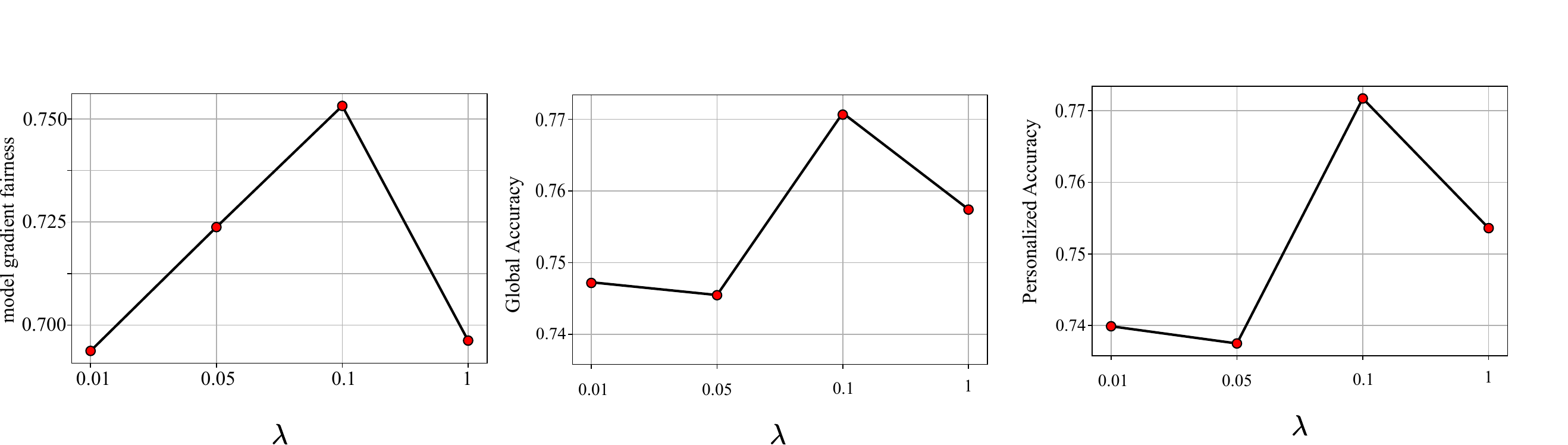}
    \vspace{-0.15in}
    \captionsetup{font=small}
    \caption{Personalized accuracy and model gradient fairness w.r.t. the trade-off parameter between the supervised loss and the motif prototype-based regularization.}
    \label{hyper}
    \vspace{-0.5cm}
\end{figure}

\vpara{Experiment on large dataset} We conducted the experiment on a large dataset, TWITTER-Real-Graph-Partial dataset
\cite{Pan2015CogBoostBF}, containing 144,033 graphs with an average of 4.03 nodes and 4.98 edges per graph. We compare with the most competitive Graph FL and incentive-based FL baselines.
The results in Table \ref{tab:res3} indicate our advantage even at a large scale.
\begin{table}[h]
\centering
\setlength{\tabcolsep}{1pt}
\renewcommand{\arraystretch}{1.2}
\vspace{-0.12in}
\resizebox{0.5\columnwidth}{!}{
\begin{tabular}{l|ccc}
\hline
\multicolumn{1}{l|}{} & \multicolumn{3}{c}{TWITTER-Real-Graph-Partial}      \\ \cline{2-4} 
\multicolumn{1}{l|}{Dataset} & \makecell{model gradient\\ fairness} & \makecell{global\\ accuracy} & \multicolumn{1}{c}{\makecell{personalized\\ accuracy}}\\ \hline
\multicolumn{1}{l|}{Self-train}                  & \multicolumn{1}{c:}{-}      & \multicolumn{1}{c}{-}        &0.594$\pm$0.002          \\ 
\multicolumn{1}{l|}{GCFL}                    & \multicolumn{1}{c:}{0.337$\pm$0.108}    & \multicolumn{1}{c}{-}       &  {0.612}$\pm$0.007      \\

\multicolumn{1}{l|}{(Xu et al. 2021)}       &   \multicolumn{1}{c:}{0.361$\pm$0.095}  & \multicolumn{1}{c}{0.601$\pm$0.010} &  0.598$\pm$0.003  \\

\multicolumn{1}{l|}{ours}                     &\multicolumn{1}{c:}{{0.401}$\pm$0.052}    & \multicolumn{1}{c}{{0.620}$\pm$0.014} &  0.600$\pm$0.008 \\
\hline
\end{tabular}
}
\captionsetup{font=small}
\caption{Results on a large scale dataset.}
\label{tab:res3}
\vspace{-0.3cm}
\end{table}

\subsection{Algorithm}
The pseudocode of our framework is summarized in Algorithm \ref{alg:1}. In each communication round, our framework mainly consists of three parts: (i) the server could aggregate the local gradients, update the value of agents and allocate the gradients to the agents according to their values (line 3-5). (ii) The agents then update the local models for $E$ rounds. (iii) The agents then update the local motif prototypes and the server update the global motif prototypes.

\begin{algorithm}[h] 
    \caption{Our federated learning framework}
    \begin{algorithmic}[1]
        \REQUIRE A total of $m$ agents, the number of local motifs $k_i,i=1,2,\cdots, m$, the number of motifs across the agents $K$, communication rounds $T$, the local iterations in a communication round $E$, the initial local models $\varPhi_i$, the local motif prototypes $\mathbf{c}^t_{i,k}, k=1,2,\cdots,K, t=1,2,\cdots,T$.
        \ENSURE The updated local models $\varPhi_i^T$.
        \STATE Initialize global prototypes for all motifs.
        \FOR{each round $t=1,2,\cdots, T$}
            \STATE Aggregate the local gradients by Eq. \eqref{aggre}.
            \STATE Update the value of agents by Eq. \eqref{value}.
            \STATE Allocate the gradient to agents by Eq. \eqref{reward}.
            \FOR{each client $i=1,2,\cdots m$ \textbf{in parallel}}
                \STATE Update the local model by Eq. \eqref{loss} for $E$ rounds.
            \ENDFOR
            \STATE Update the global motif prototypes by Eq. \eqref{local} and \eqref{global}.
        \ENDFOR
    \end{algorithmic}
    \label{alg:1}
\end{algorithm}

The time complexity of our model mainly consists of three parts. (1) The aggregation of local gradients and the update of the value costs $O(m)$. (2) The time complexity of local updates on each agent costs $O(E(X+k_i|D_i|))$ at most, where $X$ is the complexity of the backbone GNN. (3) The update of the global motif prototypes costs $O(Km)$ at most. Based on the analysis above, the overall time complexity of our framework on each agent in each communication round is $O((K+1)m+EX+Ek_i|D_i|)$ at most.

\subsection{Convergence Analysis of Our Algorithm}
In this subsection, we give the additional proof that our algorithm could finally convergent to a stationary point.
For the convenience in the detailed proof, set $\varPhi_i=(\omega_i,\phi_i)$, and decompose the parameters of the whole model $\varPhi_i$ into two parts: embedding parameter $\omega_i$ and the decision layer $\phi_i$. Then the local loss function of agent $i$ can be written as:
\begin{equation}
    L_i(\omega_i,\phi_i,D_i,Y)=L_S(F(D_i),Y)+\lambda \sum_{k=1}^{k_i}||\mathbf{c}_{i,k}^t-\mathbf{c}_{\mathcal{N},k}^t||,
\end{equation}
where the global motif prototype
\begin{equation}
\mathbf{c}_{\mathcal{N},k}^t=\frac{1}{|\mathcal{N}_k|}\sum_{i\in \mathcal{N}_k}\frac{|D_{i,k}|}{N_k}\mathbf{c}_{i,k}^t,
\end{equation}
and the local motif prototype 
\begin{equation}
    \mathbf{c}_{i,k}^t=\frac{1}{|D_{i,k}|}\sum_{G\in D_{i,k}}f_{\omega_i^t}(G).
\end{equation}

\Hide{
For the convenience, we denote the global motif prototype set as $\mathbf{c}^t_{\mathcal{N}}=\{\mathbf{c}^t_{\mathcal{N},1},
\cdots, \mathbf{c}^t_{\mathcal{N},K}\}$, and local motif prototype set as $\mathbf{c}_i^t=\{\mathbf{c}^t_{i,1},
\cdots, \mathbf{c}^t_{i,K}\}$. If motif $k$ does not exist in agent $i$, we set $\mathbf{c}_{i,k}=\mathbf{c}_{k}$, therefore this component would not work on the update of local loss function.

Then the local loss function could be written as 
\begin{equation}
    L_i(\omega_i,\phi_i,D_i,Y)=L_S(F(D_i),Y)+\lambda ||\mathbf{c}_{i}^t-\mathbf{c}_{\mathcal{N}}^t||.
\end{equation}
}
\Hide{
Then based on the previous works \cite{tan2022fedproto,li2019convergence}, we have the following assumptions:
\begin{assumption}[Lipschitz smooth]
    Each local loss function $L_i$ is $\mu$-smooth, that is, for all $\varPhi_1$ and $\varPhi_2$,
    \begin{equation}
        L_i(\varPhi_1)\leq \frac{\mu}{2}||\varPhi_1-\varPhi_2||^2+\left<\nabla L_i(\varPhi_2), (\varPhi_1-\varPhi_2)\right>.
    \end{equation}
\end{assumption}

\begin{assumption}[Lipschitz continuity]
    Each local embedding function is $L$-Lipschitz continuous, that is, for all $\varPhi_1$, $\varPhi_2$ and graph instance $G$,
    \begin{equation}
        \left|f_{\omega_1}(G)-f_{\omega_2}(G)\right|\leq L\left|\omega_1-\omega_2\right|.
    \end{equation}
    \label{lip}
\end{assumption}

\begin{assumption}[Unbiased stochastic gradients and bounded variance]
    The stochastic gradient $g_{i,t}=\nabla L(\omega_{i,t},\xi_t)$ is an unbiased estimator of the local gradient.
    \begin{equation}
        \mathbb{E}_{\xi_i\sim D_i}\left[g_{i,t}\right]=\nabla L(\varPhi_{i,t})=\nabla L_t,
    \end{equation}
    and its variance is bounded by $\sigma^2$:
    \begin{equation}
        \mathbb{E}\left(||g_{i,t}-\nabla L(\varPhi_{i,t})||^2\right)\leq\sigma^2.
    \end{equation}
\end{assumption}
\begin{assumption}[Bounded expectations of stochastic gradients]
    The expectation of the local gradients is bounded by $G$:
    \begin{equation}
        \mathbb{E}[||g_{i,t}||_2]\leq G.
    \end{equation}
    \label{exp}
\end{assumption}
}
As for the iteration round notification, $e\in \{1/2,1,2\cdots,E\}$ to represent the local iterations. $tE$ represents the time step between prototype aggregation and gradient aggregation at the server and $tE+1/2$ represents the time step between the prototype aggregation at the server and the first iteration on the agents, and $L_{tE+e}$ means the loss function of the $e$-th local round in the communication round $t$. Then we could have the following theorem:
\begin{theorem}
    Suppose that the local loss function $L_i$ is $\mu$-smooth, the local embedding function $f_{\omega}$ is $L$-Lipschitz continuous, the stochastic gradient is an unbiased estimator of local gradient, the variance is bounded by $\sigma^2$, and the expectation of the local gradients is bounded by $G$. Then
    the loss function in an arbitrary agent strictly monotonically decreases in every communication round when
    \begin{equation}
        \lambda=\lambda_t=\frac{||\nabla_{tE+1/2}||}{2KLEG},
    \end{equation}
    and
    \begin{equation}
        \eta=\eta_{e'}=\frac{||\nabla_{tE+1/2}||^2}{\mu(E\sigma^2+\sum_{e=1/2}^{e'}||\nabla L_{tE+e'}||^2)}, e'=1/2,1,\cdots,E.
    \end{equation}
    As the loss function $L>0$, the loss function could converge according to the monotone bounded criterion.
\end{theorem}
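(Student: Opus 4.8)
The plan is to follow the two-phase decomposition of each communication round used in prototype-based federated convergence analyses such as \cite{tan2022fedproto}: within round $t$ the model parameters $\varPhi_i=(\omega_i,\phi_i)$ are first updated by $E$ local SGD steps (indexed $e=1/2,1,\ldots,E$, spanning the interval from $tE+1/2$ to $(t+1)E$), after which the server re-aggregates the prototypes (spanning $(t+1)E$ to $(t+1)E+1/2$, where only the global prototype $\mathbf{c}_{\mathcal{N},k}^t$ moves while $\omega_i,\phi_i$ stay fixed). I would bound the expected loss change in each phase separately and then show that the guaranteed decrease from the SGD phase strictly dominates the possible increase from the aggregation phase, yielding a net per-round decrease.

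For the local SGD phase I would invoke $\mu$-smoothness to write the standard descent inequality $L_{tE+e+1}\le L_{tE+e}+\langle\nabla L_{tE+e},\omega_{tE+e+1}-\omega_{tE+e}\rangle+\tfrac{\mu}{2}\|\omega_{tE+e+1}-\omega_{tE+e}\|^2$, substitute the SGD update $\omega_{tE+e+1}=\omega_{tE+e}-\eta_e g_{i,tE+e}$, and take expectation. Using that the stochastic gradient is unbiased and has variance bounded by $\sigma^2$, this produces a per-step bound of the form $\mathbb{E}[L_{tE+e+1}]\le L_{tE+e}-(\eta_e-\tfrac{\mu\eta_e^2}{2})\|\nabla L_{tE+e}\|^2+\tfrac{\mu\eta_e^2}{2}\sigma^2$. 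Telescoping over $e=1/2,\ldots,E$ and plugging in the prescribed $\eta_{e'}=\|\nabla_{tE+1/2}\|^2/(\mu(E\sigma^2+\sum_{e=1/2}^{e'}\|\nabla L_{tE+e'}\|^2))$ is designed to force the accumulated right-hand side strictly below $L_{tE+1/2}$, so the SGD phase contributes a quantified decrease controlled by $\|\nabla_{tE+1/2}\|^2$.

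For the aggregation phase the supervised term $L_S$ is unchanged because $\omega_i,\phi_i$ are frozen, so the only variation is in the regularizer $\lambda\sum_k\|\mathbf{c}_{i,k}^t-\mathbf{c}_{\mathcal{N},k}^t\|$. I would apply the reverse triangle inequality to bound this variation by $\lambda\sum_k\|\Delta\mathbf{c}_{\mathcal{N},k}^t\|$, and then bound the displacement of each global prototype using the $L$-Lipschitz continuity of the embedding $f_\omega$ together with the bounded-gradient assumption $\mathbb{E}\|g_{i,t}\|\le G$. Since each local prototype is an average of embeddings whose underlying parameters moved by at most $L\eta\cdot(\text{accumulated steps})$, and $\mathbf{c}_{\mathcal{N},k}^t$ is a convex combination of these local prototypes, summing over the $K$ motifs yields an increase controlled by a constant times $\lambda K L E G$.

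Combining the two phases, the net per-round change is bounded above by $-c\|\nabla_{tE+1/2}\|^2+(\text{const})\cdot\lambda KLEG$ with the constants absorbed into the step-size choice; setting $\lambda=\lambda_t=\|\nabla_{tE+1/2}\|/(2KLEG)$ makes the aggregation increase strictly smaller than the guaranteed SGD decrease, so the loss strictly decreases every round, and since $L>0$ is bounded below the monotone-bounded-convergence criterion gives convergence to a stationary point. I expect the main obstacle to be the aggregation-phase estimate: relating the displacement of the \emph{aggregated} global prototype to the individual local gradient norms through the Lipschitz constant $L$ and the bound $G$, while keeping the bookkeeping over all $K$ motifs and over the $E$ local steps consistent with the exact constants that appear in the prescribed $\lambda_t$ and $\eta_{e'}$.
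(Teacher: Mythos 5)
Your proposal follows essentially the same route as the paper's own proof: the paper also splits each round into a local SGD phase (its Lemma 1, the standard $\mu$-smoothness descent bound with unbiasedness and bounded variance, cited from \cite{tan2022fedproto}) and a prototype-aggregation phase (its Lemma 2, where the regularizer's change is bounded via the triangle inequality, the $L$-Lipschitz embedding, and the gradient bound $G$ to obtain $\lambda K L\eta E G$), then combines the two bounds and solves for the conditions on $\eta$ and $\lambda$ exactly as you describe. The only discrepancies are cosmetic ones inherited from the paper itself (e.g., the theorem states $\lambda_t=\|\nabla_{tE+1/2}\|/(2KLEG)$ while the paper's proof derives $\|\nabla_{tE+1/2}\|^2/(2KLEG)$), so your plan is sound and matches the published argument.
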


To give a detailed proof, we first propose two lemmas that are important for the theorem. 
Let $\varPhi_{t+1}=\varPhi_t-\eta g_t$, we could have the following lemma:
\begin{lemma}
    From the beginning of communication round $t+1$ to the last local update step, the loss function of an arbitrary agent can be bounded as
    \begin{equation}
        \mathbb{E}[L_{tE+E}]\leq L_{tE+1/2}-(\eta-\frac{\mu\eta^2}{2})\sum_{e=1/2}^{E-1}||\nabla L_{tE+e}||^2+\frac{\mu E\eta^2}{2}\sigma^2.
    \end{equation}
\end{lemma}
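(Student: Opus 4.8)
The plan is to establish a single one-step stochastic descent inequality for the local loss and then telescope it across the $E$ local updates performed during communication round $t+1$. First I would invoke the $\mu$-smoothness of $L_i$. During the local iterations the global prototype $\mathbf{c}_{\mathcal{N},k}^t$ is held fixed (it was aggregated at the server before the round began), so the regularized local objective is a genuine function of $\varPhi_i$ alone, and the smoothness inequality
\[
L_{tE+e+1} \le L_{tE+e} + \left\langle \nabla L_{tE+e},\, \varPhi_{tE+e+1}-\varPhi_{tE+e}\right\rangle + \frac{\mu}{2}\left\|\varPhi_{tE+e+1}-\varPhi_{tE+e}\right\|^2
\]
applies verbatim at each local step.

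Next I would substitute the SGD update $\varPhi_{tE+e+1} = \varPhi_{tE+e} - \eta\, g_{tE+e}$ to obtain
\[
L_{tE+e+1} \le L_{tE+e} - \eta\left\langle \nabla L_{tE+e},\, g_{tE+e}\right\rangle + \frac{\mu\eta^2}{2}\left\|g_{tE+e}\right\|^2.
\]
Taking the conditional expectation over the minibatch and applying unbiasedness $\mathbb{E}[g_{tE+e}]=\nabla L_{tE+e}$ converts the inner product into $\|\nabla L_{tE+e}\|^2$, while the second-moment identity $\mathbb{E}\|g_{tE+e}\|^2 = \|\nabla L_{tE+e}\|^2 + \mathbb{E}\|g_{tE+e}-\nabla L_{tE+e}\|^2$ together with the variance bound $\sigma^2$ yields the per-step estimate
\[
\mathbb{E}[L_{tE+e+1}] \le L_{tE+e} - \left(\eta-\frac{\mu\eta^2}{2}\right)\left\|\nabla L_{tE+e}\right\|^2 + \frac{\mu\eta^2}{2}\sigma^2.
\]

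Finally I would sum this inequality over the $E$ steps that start at $e\in\{1/2,1,2,\dots,E-1\}$, carrying the state from $tE+1/2$ to $tE+E$. The loss terms telescope, the gradient-norm terms accumulate into $\sum_{e=1/2}^{E-1}\|\nabla L_{tE+e}\|^2$, and the $E$ copies of the variance term combine into $\frac{\mu E\eta^2}{2}\sigma^2$, which is exactly the claimed bound. The delicate points I anticipate are bookkeeping rather than conceptual: handling the half-integer index $1/2$ consistently (it marks the state right after the prototype has been received but before the first gradient step, so it is the natural anchor of the telescope), and being careful that the intermediate gradient norms are themselves random, so the cross-step expectations should be chained via the tower property; writing the realized norms without an expectation, as in the statement, is the usual mild shorthand. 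The prototype-regularization term is no obstacle here, since it is Lipschitz-smooth in $\omega_i$ (inherited from the $L$-Lipschitz embedding $f_\omega$) and is already absorbed into the $\mu$-smoothness of $L_i$; its interaction with the aggregation step is what the companion lemma, not this one, must address.
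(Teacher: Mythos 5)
Your proof is correct and takes essentially the same approach as the paper's: the paper itself defers this lemma to the FedProto reference \cite{tan2022fedproto}, whose argument is exactly your chain of $\mu$-smoothness, substitution of the SGD update, expectation via unbiasedness plus the variance bound, and telescoping over the $E$ local steps. Your remarks on the half-index bookkeeping and on the realized (un-expected) gradient norms being a mild notational shorthand also match how the cited proof handles these points.
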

Detailed proof can be found in \cite{tan2022fedproto}.
\begin{lemma}
    After the aggregation of local motif prototypes, the loss function of an agent can be bounded as:
    \begin{equation}
        \mathbb{E}[L_{(t+1)E+1/2}]< L_{(t+1)E}+\lambda KL\eta EG,
    \end{equation}
    where $K$ is the number of motifs on the server.
\end{lemma}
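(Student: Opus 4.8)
The plan is to exploit the fact that the half-step from $(t+1)E$ to $(t+1)E+1/2$ contains no local gradient update: the sole operation in this interval is the server's aggregation of the uploaded local prototypes into a refreshed global prototype. Hence the embedding parameters $\omega_i$ (and the decision layer $\phi_i$) of every agent are frozen, so the supervised term $L_S(F(D_i),Y)$ and the local prototypes $\mathbf{c}_{i,k}^{t+1}$ are identical at the two endpoints, and the entire change in the loss is confined to the regularization term. First I would write
\[
\mathbb{E}[L_{(t+1)E+1/2}]-L_{(t+1)E}=\lambda\sum_{k=1}^{k_i}\bigl(\|\mathbf{c}_{i,k}^{t+1}-\mathbf{c}_{\mathcal{N},k}^{t+1}\|-\|\mathbf{c}_{i,k}^{t+1}-\mathbf{c}_{\mathcal{N},k}^{t}\|\bigr),
\]
where $\mathbf{c}_{\mathcal{N},k}^{t+1}$ is the newly aggregated global prototype and $\mathbf{c}_{\mathcal{N},k}^{t}$ the previous one.

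Next I would apply the reverse triangle inequality to each summand, giving $\|\mathbf{c}_{i,k}^{t+1}-\mathbf{c}_{\mathcal{N},k}^{t+1}\|-\|\mathbf{c}_{i,k}^{t+1}-\mathbf{c}_{\mathcal{N},k}^{t}\|\le\|\mathbf{c}_{\mathcal{N},k}^{t+1}-\mathbf{c}_{\mathcal{N},k}^{t}\|$, so the problem reduces to controlling how far each motif's global prototype can move in one aggregation. Because the global prototype is a (sub-)convex combination of the local prototypes (the weights $\tfrac{1}{|\mathcal{N}_k|}\tfrac{|D_{i,k}|}{N_k}$ summing to at most one), the triangle inequality pushes this bound down to the per-agent prototype drift $\|\mathbf{c}_{i,k}^{t+1}-\mathbf{c}_{i,k}^{t}\|$. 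Unfolding $\mathbf{c}_{i,k}^{t}=\tfrac{1}{|D_{i,k}|}\sum_{G\in D_{i,k}}f_{\omega_i^{t}}(G)$ and invoking the $L$-Lipschitz continuity of the embedding map $f_\omega$, this drift is at most $L\,\|\omega_i^{t+1}-\omega_i^{t}\|$.

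The last quantitative step is to bound the movement of the embedding parameters across the $E$ local iterations of the round. Using the update rule $\varPhi_{t+1}=\varPhi_t-\eta g_t$ and the triangle inequality, $\|\omega_i^{t+1}-\omega_i^{t}\|\le\eta\sum_{e=1}^{E}\|g_{tE+e}\|$, and taking expectations with the bounded-gradient-expectation assumption $\mathbb{E}[\|g\|]\le G$ yields $\mathbb{E}\|\omega_i^{t+1}-\omega_i^{t}\|\le\eta E G$. Chaining the three bounds and summing over the $k_i\le K$ motifs held by agent $i$ gives $\mathbb{E}[L_{(t+1)E+1/2}]-L_{(t+1)E}\le\lambda k_i L\eta EG\le\lambda KL\eta EG$, which is the claim; the strict inequality of the statement follows whenever $k_i<K$, i.e.\ when the agent does not possess every motif.

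I expect the main obstacle to be conceptual bookkeeping rather than hard estimation: pinning down precisely which quantities are frozen across the half-step $(t+1)E\to(t+1)E+1/2$ (only the server's global prototype changes, and only because it now re-aggregates the post-training local prototypes) and verifying that the prototype-aggregation weights genuinely form a sub-convex combination, so the averaging step cannot inflate the per-agent drift bound. Once the time indexing is settled, the remaining inequalities are routine applications of the reverse triangle inequality, the $L$-Lipschitz assumption, and the bounded-expectation assumption already imposed in the theorem's hypotheses, and the argument parallels the prototype-update lemma of \cite{tan2022fedproto}.
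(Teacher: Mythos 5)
Your proposal is correct and follows essentially the same route as the paper's own proof: isolating the change in the prototype-regularization term across the half-step, applying the reverse triangle inequality to reduce to the global-prototype drift, expanding the aggregation as a (sub-)convex combination to get per-agent prototype drift, invoking $L$-Lipschitz continuity of $f_\omega$ and the SGD update rule to bound that drift by $L\eta\sum_e\|g\|$, and finally taking expectations with $\mathbb{E}[\|g\|]\le G$ and summing over $k_i\le K$ motifs (the paper uses a shifted time-index convention, $\mathbf{c}_{\mathcal{N},k}^{t+2}$ vs.\ $\mathbf{c}_{\mathcal{N},k}^{t+1}$, but the argument is identical). Your closing remark about when the inequality is strict is a fair observation; the paper itself only establishes $\le$ yet states the lemma with $<$.
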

\begin{proof}
    \begin{align}
        L_{(t+1)E+1/2}&= L_{(t+1)E}+L_{(t+1)E+1/2}-L_{(t+1)E}\\
        &=L_{(t+1)E}+\lambda \sum_{k=1}^{k_i}\left(||\mathbf{c}_{i,k}^{t+1}-\mathbf{c}_{\mathcal{N},k}^{t+2}||-||\mathbf{c}_{i,k}^{t+1}-\mathbf{c}_{\mathcal{N},k}^{t+1}||\right)\\
        &\overset{(a)}{\leq} L_{(t+1)E}+\lambda \sum_{k=1}^{k_i}\left(||\mathbf{c}_{\mathcal{N},k}^{t+2}-\mathbf{c}_{\mathcal{N},k}^{t+1}||\right) \\
        &=L_{(t+1)E}+\lambda\sum_{k=1}^{k_i}\left(\frac{1}{|\mathcal{N}_k|}\sum_{i\in \mathcal{N}_k}\frac{|D_{i,k}|}{N_k}||\mathbf{c}_{i,k}^{t+2}-\mathbf{c}_{i,k}^{t+1}||\right)\\
        &\overset{(b)}{=}L_{(t+1)E}+
        \lambda \sum_{k=1}^{k_i}\left(\frac{1}{|\mathcal{N}_k|}\sum_{i\in\mathcal{N}_k}\frac{|D_{i,k}|}{N_k}\frac{1}{|D_{i,k}|}\sum_{G\in D_{i,k}}||f_{\omega_i^{t+2}}(G)-f_{\omega_i^{t+1}}(G)||\right)\\
        &\overset{}{\leq}L_{(t+1)E}+\lambda\sum_{k=1}^{k_i}\left(\frac{1}{|\mathcal{N}_k|}\sum_{i\in\mathcal{N}_k}\frac{L}{N_k}\sum_{G\in D_{i,k}}||\omega_i^{t+2}-\omega_i^{t+1}||\right)\\
        &\overset{(c)}{\leq} L_{(t+1)E}+\lambda\sum_{k=1}^{k_i}\left(\frac{1}{|\mathcal{N}_k|}\sum_{i\in\mathcal{N}_k}\frac{L}{N_k}\sum_{G\in D_{i,k}}||\varPhi_i^{t+2}-\varPhi_i^{t+1}||\right)\\
        &=L_{(t+1)E}+\lambda \sum_{k=1}^{k_i}\left(\frac{1}{|\mathcal{N}_k|}\sum_{i\in \mathcal{N}_k}\frac{L|D_{i,k}|}{N_k}\eta||\sum_{e=1/2}^{E-1}g_{i,tE+e}||\right)\\
        &\leq L_{(t+1)E}+\lambda \sum_{k=1}^{k_i}\left(\frac{1}{|\mathcal{N}_k|}\sum_{i\in\mathcal{N}_k}L\eta\sum_{e=1/2}^{E-1}||g_{i,tE+e}||\right)
    \end{align}
    After taking expectations from the random variable $\xi_t$ from both sizes of the equation, we could get:
    \begin{align}
        \mathbb{E}[L_{(t+1)E+1/2}]&\leq L_{(t+1)E}+\lambda \sum_{k=1}^{k_i}\left(\frac{1}{|\mathcal{N}_k|}\sum_{i\in\mathcal{N}_k}L\eta\sum_{e=1/2}^{E-1}\mathbb{E}[||g_{i,tE+e}||]\right)\\
        &\overset{}{\leq} L_{(t+1)E}+\lambda \sum_{k=1}^{k_i}\left(\frac{1}{\mathcal{N}_k}\sum_{i\in\mathcal{N}_k}L\eta EG\right)\\
        &=L_{(t+1)E}+\lambda k_iL\eta EG\leq L_{(t+1)E}+\lambda KL\eta EG ,
    \end{align}
    
    where (a) follows from the triangle inequality, (b) follows from Eq. \eqref{local}, (c) comes from the fact that $\omega_i^{t}$ is the subset of $\varPhi_i^t$.
\end{proof}

\vpara{Completing proof for Theorem 2.}
\begin{proof}
    From the above two lemmas, we could get that
\begin{equation}
    \mathbb{E}[L_{(t+1)E+1/2}]<L_{tE+1/2}-(\eta-\frac{\mu\eta^2}{2})\sum_{e=1/2}^{E-1}||\nabla L_{tE+e}||^2+\frac{\mu E\eta^2}{2}\sigma^2+\lambda KL\eta EG. 
\end{equation}
To ensure that the loss function decreases as the communication round $t$ increases, it is easy to get
\begin{equation}
    -(\eta-\frac{\mu\eta^2}{2})\sum_{e=1/2}^{E-1}||\nabla_{tE+e}||^2+\frac{\mu E\eta^2}{2}\sigma^2+\lambda KL\eta EG<0.
\end{equation}
Then the learning rate $\eta$ and the trade-off parameter $\lambda$ should satisfy:
\begin{equation}
    \eta<\frac{2\left(\sum_{e=1/2}^{E-1}||\nabla L_{tE+e}||^2-\lambda KLEG\right)}{\mu(E\sigma^2+\sum_{e=1/2}^{E-1}||\nabla L_{tE+e}||^2)}
\end{equation}

and
\begin{equation}
    \lambda<\frac{\sum_{e=1/2}^{E-1}||\nabla L_{tE+e}||^2}{KLEG}
\end{equation}
In practice, we cannot get the gradients after the current round. Therefore, we take the $\lambda$ to keep the same during one communication round:
\begin{equation}
    \lambda_t=\frac{||\nabla_{tE+1/2}||^2}{2KLEG},
\end{equation}
and keep the learning rate changing with the local update round:
\begin{equation}
    \eta_{e'}=\frac{||\nabla_{tE+1/2}||^2}{\mu(E\sigma^2+\sum_{e=1/2}^{e'}||\nabla L_{tE+e'}||^2)}, e'=1/2,1,2,\cdots, E.
\end{equation}

Here we complete the proof for Theorem 2.
\end{proof}

\clearpage


\end{document}